\definecolor{blue}{rgb}{0.0, 0.0, 1.0}
\definecolor{jgGreen}{rgb}{0.0, 0.5, 0.0}
\definecolor{pink}{rgb}{0.5, 0.0, 0.25}
\newcommand{\ignore}[1]{}
\newcommand{\prm}{R_\phi^\e}
\newcommand{\dl}{\bar R_\phi}
\newcommand{\cprm}{R^\e}
\newcommand{\cdl}{\bar R}
\newcommand{\cB}{{\mathcal B}}
\newcommand{\cD}{\mathcal{D}}
\newcommand{\cH}{\mathcal{H}}
\newcommand{\bh}{{\mathbf h}}
\newcommand{\bx}{{\mathbf x}}
\newcommand{\by}{{\mathbf y}}
\DeclareMathOperator*{\argmin}{argmin}
\DeclareMathOperator*{\esssup}{ess\,sup}
\DeclareMathOperator{\sgn}{sign}
\newcommand{\iphi}{\phi^-}
\def\Rset{\mathbb{R}}
\newcommand{\PP}{{\mathbb P}}
\newcommand{\QQ}{{\mathbb Q}}
\newcommand{\one}{{\mathbf{1}}}
\newcommand{\zero}{{\mathbf{0}}}
\newcommand{\Wball}[1]{{\cB^\infty_{#1}}}
\newcommand{\ov}{\overline}
\newcommand{\e}{\epsilon}
\newtheorem{theorem}{Theorem}
\newtheorem*{informaltheorem}{Informal Theorem}
\newtheorem{definition}{Definition}
\newtheorem{lemma}{Lemma}
\newtheorem{corollary}{Corollary}
\newtheorem{prop}{Proposition}
\newtheorem{assumption}{Assumption}
\title{The Adversarial Consistency of Surrogate Risks for Binary Classification}
\author{%
  Natalie S.~Frank \\
  Courant Institute\\
  New York University\\
  New York, NY 10012 \\
  \texttt{nf1066@nyu.edu} \\
   \And
   Jonathan Niles-Weed \\
  Courant Institute\\
  New York University\\
  New York, NY 10012 \\
  \texttt{jnw@cims.nyu.edu}
}
\begin{document}

\maketitle

\begin{abstract}
	We study the consistency of surrogate risks for robust binary classification.
	It is common to learn robust classifiers by adversarial training, which seeks to minimize the expected $0$-$1$ loss when each example can be maliciously corrupted within a small ball.
	We give a simple and complete characterization of the set of surrogate loss functions that are \emph{consistent}, i.e., that can replace the $0$-$1$ loss without affecting the minimizing sequences of the original adversarial risk, for any data distribution.
	We also prove a quantitative version of adversarial consistency for the $\rho$-margin loss.
	Our results reveal that the class of adversarially consistent surrogates is substantially smaller than in the standard setting, where many common surrogates are known to be consistent.
%
%
%
%
%
%
\end{abstract}
\section{Introduction}\label{sec:intro}
        A central issue in the study of neural nets is their susceptibility to adversarial perturbations---perturbations imperceptible to the human eye can cause a neural net to misclassify an image \citep{szegedy2013intriguing,biggio2013evasion}. The same phenomenon appears in other types of data such as speech and text. As deep nets are used in applications such as self-driving cars and medical imaging \citep{paschali2018generalizability,LiXuTraffic}, training classifiers robust to adversarial perturbations is a central question in machine learning.

    The foundational theory of surrogates for classication in well understood. In the standard classification setting, one seeks to minimize the \emph{classification} risk--- the proportion of incorrectly classified data. 
    Since minimizing the classification risk is typically computationally intractable \citep{BenDavidEironLong2003}, a common approach is to instead minimize a better-behaved alternative called the \emph{surrogate risk}. However, one must verify that classifiers with low surrogate risk also achieve low classification risk. If for every data distribution, a sequence of functions minimizing the surrogate also minimizes the classification risk, the surrogate risk is called \emph{consistent}. Many classic papers study the consistency of surrogate risks in the standard classification setting \citep{BartlettJordanMcAuliffe2006,Lin2004,Steinwart2007,LongServedioH-consistency,ZhangAgarwal}. 
    
    
    
    Unlike the standard case, however, little is known about the consistency of surrogate risks in the context of adversarial training, which involves risks that compute the supremum of a surrogate loss function over an $\e$-ball.
    Though this question has been partially studied in the literature~\citep{AwasthiFrankMao2021,AwasthiMaoMohri,MeunierEttedguietal22}, a general theory is lacking.
    Existing results reveal, however, that the situation is substantially different from the standard case: for instance, \citep{MeunierEttedguietal22} show that no 
    \emph{convex} surrogate can be adversarially consistent.
	To our knowledge, no adversarially consistent risks are known.

	In this work, we give a complete characterization of adversarial consistency for surrogate losses.
    

    \textbf{Our Contributions:}
    \begin{itemize}
        \item In Section~\ref{sec:adversarially_consistent_losses} we give a surprisingly simple necessary and sufficient condition for adversarial consistency:
    \begin{informaltheorem}
    Under reasonable assumptions on the surrogate loss $\phi$, the supremum-based $\phi$-risk is adversarially consistent 
    if and only if
     $\inf_\alpha \phi(\alpha)/2+ \phi(-\alpha)/2<\phi(0)$.
    \end{informaltheorem}\label{ith:1}
In particular, this result proves consistency for any loss function that is \emph{not} midpoint convex at the origin.
\item In Section~\ref{sec:quantitative}, we specialize to the case of the $\rho$-margin loss, where we obtain a quantitative proof of adversarial consistency by explicitly bounding the excess adversarial risk.
    \end{itemize}


    To the best of the authors' knowledge, this paper is the first to prove that a loss-based learning procedure is consistent for a wide range of distributions in the adversarial setting. 
    As mentioned above, the $\rho$-margin loss $\phi_\rho(\alpha)=\min(1,\max( 1-\alpha/\rho,0))$ satisfies the conditions of Informal Theorem above, as does the shifted sigmoid loss 
     $\phi_\tau(\alpha)=1/(1+\exp(\alpha-\tau))$ with $\tau>0$, which confirms a conjecture of  \citet{MeunierEttedguietal22}.
    By contrast, all convex losses satisfy $\inf_\alpha \phi(\alpha)/2+ \phi(-\alpha)/2=\phi(0)$, and are therefore not adversarially consistent.

    In addition to consistency, one would hope to obtain a quantitative comparison between the adversarial surrogate risk and the adversarial classification risk. 
    Our bound in Section~\ref{sec:quantitative} shows that the excess error of the adversarial $\rho$-margin loss is a linear upper bound on the adversarial classification error, which implies that minimizing the adversarial $\rho$-margin loss is an effective procedure for minimizing the adversarial classification error. Extending the bound in Section~\ref{sec:quantitative} to further losses remains an open question.

    \section{Related Works}\label{sec:related_works}
        Many previous works have studied the consistency of surrogate risks \citep{BartlettJordanMcAuliffe2006,Lin2004,Steinwart2007, LongServedioH-consistency,ZhangAgarwal}. The classic papers by \citep{BartlettJordanMcAuliffe2006,Lin2004,zhang04} explore the consistency of surrogate risks over all measurable functions. The works \citep{LongServedioH-consistency,ZhangAgarwal,AwasthiMaoMohriZhong22} study $\cH$-consistency, which is consistency restricted to a smaller set of functions. \citet{Steinwart2007} generalizes some of these results into a framework referred to as \emph{calibration}.  \citet{AwasthiFrankMao2021,bao2021calibrated,AwasthiMaoMohri,MeunierEttedguietal22} then use this framework to analyze the calibration of adversarial surrogate losses.  Furthermore \citet{MeunierEttedguietal22} relate calibration to consistency for adversarial losses in certain cases --- they show that no convex loss is adversarially consistent. They also conjecture that a class of surrogate losses called the \emph{odd shifted} losses are adversarially consistent. \citet{MeunierEttedguietal22} also show that in a restricted setting, surrogates are consistent for `optimal attacks'. The proof of our result formalizes this intuition. Simultaneous work \citep{MaoMohriZhong2023crossentropy} shows that the $\rho$-margin loss is adversarially $\cH$-consistent for typical function classes. Lastly, \citet{bhattacharjee2020nonparametric,bhattacharjee2021consistent} use a different set of techniques to study the consistency of non-parametric methods in adversarial scenarios. 
        
        Our results rely on recent works establishing  the properties of minimizers to surrogate adversarial risks. \citep{AwasthiFrankMohri2021, PydiJog2021,BungertGarciaMurray2021} all proved the existence of minimizers to the adversarial risk and \citep{PydiJog2021} proved a minimax theorem for the zero-one loss.  Building on the work of \citep{PydiJog2021}, \citep{FrankNilesWeed23minimax} later proved similar existence and minimax statements for arbitrary surrogate losses. \citet{TrillosJacobsKim22,trillos2023existence} extend some of these results to the multiclass case. Lastly, \citep{trillosMurray2020} 
        study further properties of the minimizers to the adversarial classification loss.

\section{Problem Setup}\label{sec:problem_setup}
This section contains the necessary background for our results.
Section~\ref{sec:surrogate_risks} gives precise definitions for the main concepts, and Section~\ref{sec:existence_minimax} describes the minimax theorems that are at the heart of our proof.
\subsection{Surrogate Risks}\label{sec:surrogate_risks}
This paper studies binary classification on $\Rset^d$. Explicitly, labels are $\{-1,+1\}$ and the data is distributed according to a distribution $\cD$ on the set $\Rset^d\times\{-1,+1\}$. The measures $\PP_1$, $\PP_0$ define the relative probabilities of finding points with a given label in a region of $\Rset^d$. Formally, define measures on $\Rset^d$ by
\[\PP_1(A)=\cD(A\times \{+1\}), \PP_0(A)=\cD(A\times \{-1\}).\]

 The \emph{classification risk} $R(f)$ is then the probability of misclassifying a point under $\cD$:
\begin{equation}\label{eq:standard_zero_one_loss}
    R(f)= \int\one_{f(\bx)\leq 0}d\PP_1+\int  \one_{f(\bx)> 0} d\PP_0.    
\end{equation}

The surrogate to $R$ is
\begin{equation}\label{eq:standard_phi_loss}
    R_\phi(f)=\int \phi(f)d\PP_1+\int\phi(-f) d\PP_0\,.
\end{equation} 
A classifier can be obtained by minimizing either $R$ or $R_\phi$ over the set of all measurable functions.
A point $\bx$ is then classified according to $\sgn f$. There are many possible choices for $\phi$---typically one chooses a loss that is easy to optimize. In this paper, we assume that 
\begin{assumption}\label{as:phi}
    $\phi$ is non-increasing, non-negative, continuous, and $\lim_{\alpha\to \infty}\phi(\alpha)=0$.
\end{assumption}
 Most surrogate losses in machine learning satisfy this assumption. Learning algorithms typically optimize the risk in \eqref{eq:standard_phi_loss} using an iterative procedure, which produces a sequence of functions that minimizes $R_\phi$. We call $R_\phi$ a \emph{consistent risk} and $\phi$ a \emph{consistent loss} if for all distributions, every minimizing sequence of $R_\phi$ is also a minimizing sequence of $R$.\footnote{In the context of standard (non-adversarial) learning, the concept we defined as consistency is often referred to as \emph{calibration}, see for instance \citep{BartlettJordanMcAuliffe2006,Steinwart2007}. We opt for the term `consistency' as the prior works \citep{AwasthiFrankMao2021,AwasthiMaoMohri,MeunierEttedguietal22} use calibration to refer to a different but related concept in the adversarial setting.} Alternatively, the risks $R$, $R_\phi$ can be expressed in terms of the quantities $\PP=\PP_0+\PP_1$ and $\eta=d\PP_1/d\PP$. For all $\eta\in [0,1]$, define

  \begin{equation}\label{eq:C_def}
    C(\eta,\alpha)=\eta \one_{\alpha \leq 0}+(1-\eta) \one_{\alpha >0},\quad C^*(\eta)=\inf_\alpha C(\eta,\alpha),
 \end{equation}
 \begin{equation}\label{eq:C_phi_def}
    C_\phi(\eta,\alpha) = \eta\phi(\alpha)+(1-\eta)\phi(-\alpha),\quad C_\phi^*(\eta)=\inf_\alpha C_\phi(\eta,\alpha) 
\end{equation}
For more on the definitions of $R,R_\phi,C,C_\phi$, see \citep{BartlettJordanMcAuliffe2006} or Sections~3.1 and~3.2 of \citep{FrankNilesWeed23minimax}. Using these definitions, $R(f)=\int C(\eta(\bx),f(\bx))d\PP$ and 
 \begin{equation}\label{eq:pw_loss}
    R_\phi(f)=\int C_\phi(\eta(\bx),f(\bx))d\PP    
\end{equation}

This alternative view of the risks $R$ and $R_\phi$ provides a `pointwise' criterion for consistency--- if the function $f(\bx)$ minimizes $C_\phi(\eta(\bx),\cdot)$ at each point, then it also minimizes $R_\phi$. However, minimizers to $C_\phi(\eta,\cdot)$ over $\Rset$ do not always exist--- consider for instance $\eta=1$ for the exponential loss $\phi(\alpha)=e^{-\alpha}$. In general, for minimizers of $C_\phi(\eta,\cdot)$ to exist, one must work over the extended real numbers $\ov \Rset=\Rset \cup \{-\infty,+\infty\}$. The following proposition proved in Appendix~\ref{app:alt_consistency_characterization} implies that `pointwise' considerations also extends to minimizing sequences of functions.
    \begin{prop}\label{prop:alt_consistency_characterization} 
    The following are equivalent:
    \begin{enumerate}[label=\arabic*)]
        \item\label{it:consistent} $\phi$ is consistent
        \item \label{it:C_seq_background}Every minimizing sequence of $C_\phi(\eta,\cdot)$ is also a minimizing sequence of $C(\eta,\cdot)$
        \item \label{it:R_min_background}Every $\ov \Rset$-valued minimizer of $R_\phi$ is a minimizer of $R$
    \end{enumerate}
    \end{prop}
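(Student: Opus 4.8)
The plan is to prove the two equivalences \ref{it:consistent}$\,\Leftrightarrow\,$\ref{it:C_seq_background} and \ref{it:C_seq_background}$\,\Leftrightarrow\,$\ref{it:R_min_background} separately. I would begin with bookkeeping on the extended reals: extend $\phi$ to $\ov\Rset$ by $\phi(+\infty)=0$ and $\phi(-\infty)=\lim_{\alpha\to-\infty}\phi(\alpha)\in[0,\infty]$, which by Assumption~\ref{as:phi} is continuous and non-increasing on the compact space $\ov\Rset$, and extend $C(\eta,\cdot),C_\phi(\eta,\cdot)$ likewise (so $C(\eta,+\infty)=1-\eta$, $C(\eta,-\infty)=\eta$). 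Then $C_\phi(\eta,\cdot)$ is continuous on a compact set, so its infimum $C_\phi^*(\eta)$ is attained and $C_\phi^*(\eta)\le C_\phi(\eta,0)=\phi(0)<\infty$. I would also record the identities $R^*:=\inf_f R(f)=\int C^*(\eta(\bx))\,d\PP$ and $R_\phi^*:=\inf_f R_\phi(f)=\int C_\phi^*(\eta(\bx))\,d\PP$, with the infima over $\ov\Rset$-valued measurable $f$: the first is immediate from the measurable choice $f=\sgn(\eta-\tfrac12)$, and the second follows from a standard measurable selection of near-minimizers of $C_\phi(\eta(\bx),\cdot)$ (cf.~\citep{BartlettJordanMcAuliffe2006,FrankNilesWeed23minimax}).

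The easy direction of each equivalence uses distributions concentrated at a single point $\bx_0$ with $\cD(\{\bx_0\}\times\{+1\})=\eta_0$, for which $R_\phi(f)=C_\phi(\eta_0,f(\bx_0))$ and $R(f)=C(\eta_0,f(\bx_0))$. For $\lnot\ref{it:C_seq_background}\Rightarrow\lnot\ref{it:consistent}$: if $(\alpha_n)$ is a minimizing sequence of $C_\phi(\eta_0,\cdot)$ that is not one of $C(\eta_0,\cdot)$, then the constant functions $f_n\equiv\alpha_n$ minimize $R_\phi$ in the limit but not $R$. For $\ref{it:C_seq_background}\Rightarrow\ref{it:R_min_background}$: if $f^*$ minimizes $R_\phi$ then $\int[C_\phi(\eta,f^*)-C_\phi^*(\eta)]\,d\PP=0$ with non-negative integrand, so $f^*(\bx)\in\argmin_\alpha C_\phi(\eta(\bx),\alpha)$ a.e.; the constant sequence at this value is a minimizing sequence of $C_\phi(\eta(\bx),\cdot)$, hence by \ref{it:C_seq_background} of $C(\eta(\bx),\cdot)$, which forces $C(\eta(\bx),f^*(\bx))=C^*(\eta(\bx))$ a.e.\ and thus $R(f^*)=R^*$.

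For $\ref{it:C_seq_background}\Rightarrow\ref{it:consistent}$ my plan is a direct convergence argument in place of the usual calibration-function or $\psi$-transform machinery. Given a minimizing sequence $(f_n)$ of $R_\phi$ (discard the finitely many terms with $R_\phi(f_n)=\infty$), the non-negative functions $g_n(\bx):=C_\phi(\eta(\bx),f_n(\bx))-C_\phi^*(\eta(\bx))$ satisfy $\int g_n\,d\PP=R_\phi(f_n)-R_\phi^*\to0$, so $g_n\to0$ in $L^1(\PP)$, hence in $\PP$-probability, hence $\PP$-a.e.\ along a subsequence $(n_k)$. Then for a.e.\ $\bx$ the sequence $(f_{n_k}(\bx))_k$ is a minimizing sequence of $C_\phi(\eta(\bx),\cdot)$, so by \ref{it:C_seq_background} it is a minimizing sequence of $C(\eta(\bx),\cdot)$; since $0\le C\le1$ and $\PP$ is finite, dominated convergence gives $R(f_{n_k})\to\int C^*(\eta)\,d\PP=R^*$. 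As this applies to every subsequence of $(f_n)$, we conclude $R(f_n)\to R^*$.

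For $\lnot\ref{it:C_seq_background}\Rightarrow\lnot\ref{it:R_min_background}$: from a failure at $\eta_0$, pass to a subsequence along which $C(\eta_0,\alpha_n)\equiv L>C^*(\eta_0)$ (possible since $C(\eta_0,\cdot)$ takes at most two values) and $\alpha_n\to\alpha^*\in\ov\Rset$ (compactness); continuity gives $\alpha^*\in\argmin_\alpha C_\phi(\eta_0,\alpha)$. Since $C(\eta_0,\cdot)$ is constant on $[-\infty,0]$ and on $(0,\infty]$, either $C(\eta_0,\alpha^*)>C^*(\eta_0)$, or $\alpha^*=0$ and $\eta_0<\tfrac12$ (the value $\tfrac12$ being excluded because then $L=\tfrac12=C^*$); in the latter case, substituting $\alpha\mapsto-\alpha$ in the inequality $\phi(0)\le\eta_0\phi(\alpha)+(1-\eta_0)\phi(-\alpha)$ shows $0\in\argmin_\alpha C_\phi(1-\eta_0,\alpha)$ with $C(1-\eta_0,0)=1-\eta_0>\eta_0=C^*(1-\eta_0)$. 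Either way there is $\eta_1$ with a minimizer $\alpha_1$ of $C_\phi(\eta_1,\cdot)$ satisfying $C(\eta_1,\alpha_1)>C^*(\eta_1)$, and the single-point distribution at $\eta_1$ with $f^*\equiv\alpha_1$ then violates \ref{it:R_min_background}. I expect the main obstacle to be precisely this last step: a ``bad'' minimizing sequence may converge to $\alpha^*=0$, which is itself harmless, and the argument is rescued only by noticing the reflection symmetry $0\in\argmin C_\phi(\eta,\cdot)\iff0\in\argmin C_\phi(1-\eta,\cdot)$ together with the fact that the reflected point \emph{is} bad; the other technical ingredient, the selection identity $R_\phi^*=\int C_\phi^*(\eta)\,d\PP$, is routine.
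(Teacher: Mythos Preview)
Your proof is correct and follows essentially the same route as the paper: point distributions for the easy directions, the $L^1\!\to$\,a.e.\ subsequence argument for \ref{it:C_seq_background}$\Rightarrow$\ref{it:consistent}, and compactness in $\ov\Rset$ together with the reflection symmetry $0\in\argmin C_\phi(\eta,\cdot)\iff 0\in\argmin C_\phi(1-\eta,\cdot)$ to handle the delicate $\alpha^*=0$ case. The only structural difference is that the paper inserts a fourth equivalent item (``every minimizer of $C_\phi(\eta,\cdot)$ minimizes $C(\eta,\cdot)$'') and proves a chain through it, isolating the reflection step as a separate lemma (Lemma~\ref{lemma:phi_zero_value}); you instead fold that step directly into $\lnot\ref{it:C_seq_background}\Rightarrow\lnot\ref{it:R_min_background}$, which is a harmless reorganization.
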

    This result is well-known in prior literature; in particular the equivalence between \ref{it:C_seq_background} and \ref{it:R_min_background} is closely related to the equivalence between calibration and consistency in the non-adversarial setting \citep{Steinwart2007}.
    Most importantly, the equivalence between \ref{it:consistent} and \ref{it:R_min_background} reduces studying minimizing sequences of functionals to studying minimizers of functions. We will show that the equivalence between  \ref{it:consistent}  and \ref{it:C_seq_background} has an analog in the adversarial scenario, but the equivalence between  \ref{it:consistent}  and \ref{it:R_min_background} does not.

In the adversarial classification setting, every $x$-value is perturbed by a malicious adversary before undergoing classification by $f$. We assume that these perturbations are bounded by $\e$ in some norm $\|\cdot\|$ and furthermore, the adversary knows both our classifier $f$ and the true label of the point $\bx$.
In other words, $f$ misclassifies $(\bx,y)$ when there is a point $\bx'\in \ov{B_\e(\bx)}$ for which $\one_{f(\bx')\leq 0} =1$ for $y=+1$ and $\one_{f(\bx')>0}=1$ for $y=-1$. 
Conveniently, this criterion can be expressed in terms of suprema. 
For any function $g$, we define
\begin{equation*}
    S_\e(g)(\bx)=\sup_{\|\bh\|\leq \e} g(\bx+\bh)
\end{equation*}
A point $\bx$ with label $+1$ is misclassified when $S_\e(\one_{f\leq 0})(\bx)=1$ and a point $\bx$ with label $-1$ is misclassified when $S_\e(\one_{f> 0})(\bx)=1$. Hence the expected fraction of errors under the adversarial attack is
\begin{align}
    R^\e(f)&= \int S_\e(\one_{f\leq 0})d\PP_1+\int  S_\e(\one_{f>0}) d\PP_0, \label{eq:adv_zero_one_loss_sup}
\end{align}
which is called the \emph{adversarial classification risk} \footnote{Defining this integral requires some care because for a Borel function $g$, $S_\e(g)$ may not be measurable; see Section~3.3 and Appendix~A of \citep{FrankNilesWeed23minimax} for details.}. Again, optimizing the empirical version of \eqref{eq:adv_zero_one_loss_sup} is computationally intractable so instead one minimizes a surrogate of the form
\begin{align}
    &R_\phi^\e(f)=\int S_\e( \phi\circ f)d\PP_1 +\int S_\e( \phi\circ -f)d\PP_0\label{eq:adv_phi_loss}
\end{align}
Due to the supremum in this expression, we refer to such a risk as a \emph{supremum-based surrogate}. We define adversarial consistency as 
\begin{definition}
    The risk $R_\phi^\e$ is \emph{adversarially consistent} if for every data distribution, every sequence $f_n$ which minimizes $R_\phi^\e$ over all Borel measurable functions also minimizes $R^\e$. We say that the loss $\phi$ is \emph{adversarially consistent} if the risk $R_\phi^\e$ is adversarially consistent.
\end{definition}

Many convex and non-convex losses are consistent in standard classification \citep{BartlettJordanMcAuliffe2006,zhang04,Steinwart2007,ReidWilliamson2009,Lin2004}. By contrast, adversarial consistency often fails. For instance, 
    \citet{MeunierEttedguietal22} show that convex losses are not adversarially consistent. Furthermore, their example shows that the equivalence between \ref{it:consistent} and \ref{it:R_min_background} in Proposition~\ref{prop:alt_consistency_characterization} does \emph{not} hold in the adversarial context. Thus, to understand adversarial consistency, it does not suffice to compare minimizers of $R_\phi^\e$ and $R^\e$.
    To illustrate this distinction, we show the following result, adapted from \citep{MeunierEttedguietal22}.
    
    \begin{prop}\label{prop:convex_consistency_counterexample}
        Assume that $\inf_\alpha \phi(\alpha)/2 + \phi(-\alpha)/2 = \phi(0)$. Then $\phi$ is not adversarially consistent.
    \end{prop}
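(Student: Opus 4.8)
The plan is to exhibit, for a fixed perturbation radius $\e$, a single distribution $\cD$ on $\Rset\times\{-1,+1\}$ together with a sequence $f_n$ that minimizes $R_\phi^\e$ but not $R^\e$; the construction is adapted from \citet{MeunierEttedguietal22}, and the role of the hypothesis is to guarantee that the relevant surrogate‑minimizing sequence is ``blind to the sign'' of $f$ on a critical region. First I would put both risks in pointwise form. Since $\phi$ is non-increasing and continuous, $S_\e(\phi\circ f)(\bx)=\phi\bigl(\inf_{\|\bh\|\le\e}f(\bx+\bh)\bigr)$ and $S_\e(\phi\circ(-f))(\bx)=\phi\bigl(-\sup_{\|\bh\|\le\e}f(\bx+\bh)\bigr)$; writing $u_f=-S_\e(-f)$ and $v_f=S_\e(f)$ gives $R_\phi^\e(f)=\int\phi(u_f)\,d\PP_1+\int\phi(-v_f)\,d\PP_0$ and, up to the measurability subtleties of $S_\e$, $R^\e(f)=\PP_1(\{u_f\le 0\})+\PP_0(\{v_f>0\})$. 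In both cases the $\PP_1$- and $\PP_0$-terms interact only through the compatibility constraint $u_f(\bx)\le v_f(\bx')$ whenever $\|\bx-\bx'\|\le 2\e$ (choose a point of $\overline{B_\e(\bx)}\cap\overline{B_\e(\bx')}$), and by the minimax theorem of Section~\ref{sec:existence_minimax}, $\inf_f R_\phi^\e(f)$ equals the infimum of $\int\phi(u)\,d\PP_1+\int\phi(-v)\,d\PP_0$ over all admissible pairs $(u,v)$.

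Next I would design $\cD$ so that two things hold simultaneously. On a ``core'' set $K$, the compatibility constraint, together with the hypothesis $\tfrac12\phi(\alpha)+\tfrac12\phi(-\alpha)\ge\phi(0)$, forces $\int_K\phi(u)\,d\PP_1+\int_K\phi(-v)\,d\PP_0\ge\phi(0)\,\PP(K)$ for every admissible $(u,v)$ — proved by coupling the $\PP_1$- and $\PP_0$-mass on $K$ along pairs at distance $\le 2\e$, replacing $\phi(u(\bx))$ by the larger $\phi(v(\bx'))$ along each pair, and then applying midpoint convexity. Crucially, because $\alpha\mapsto\tfrac12\phi(\alpha)+\tfrac12\phi(-\alpha)$ is symmetric and attains $\phi(0)$, this bound is also saturated in the limit by envelopes that are slightly \emph{negative} on $K$, so there is a surrogate‑minimizing sequence whose sign pattern declares all the $\PP_1$-mass of $K$ misclassified. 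Off $K$ I would place a small excess of $+1$ mass that the adversarial $0$-$1$ optimum can rescue by sacrificing a strictly smaller amount of $-1$ mass, so that $R^{\e,*}$ is strictly below $\PP_1(K)$ plus the off‑core contribution. Extending the sequence by pushing $f_n\to+\infty$ off $K$ then yields $R_\phi^\e(f_n)\to\inf_f R_\phi^\e(f)$ while $R^\e(f_n)$ stays bounded below by $\PP_1(K)+(\text{off-core optimum})>R^{\e,*}$; hence $(f_n)$ does not minimize $R^\e$ and $\phi$ is not adversarially consistent.

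The main obstacle is reconciling the two demands on $\cD$: forcing the surrogate to be sign‑blind on $K$ pushes toward making $K$ balanced between the classes, but if the whole of $\cD$ is balanced in this way then $R^{\e,*}=\PP_1(\Rset)=R^\e(0)$ and nothing is gained. The construction must therefore arrange the excess $+1$ mass and the overlap geometry so that the sign‑blind region remains pivotal for the $0$-$1$ risk, and in general — e.g.\ for convex $\phi$, where $\tfrac12\phi(\alpha)+\tfrac12\phi(-\alpha)$ lies strictly above $\phi(0)$ away from the origin — this appears to require $\cD$ to have a non‑atomic part rather than finitely many atoms. Finally, it is worth stressing that $(f_n)$ does not converge to any minimizer of $R_\phi^\e$: the failure of consistency is visible only at the level of minimizing sequences, which is exactly why, in contrast with Proposition~\ref{prop:alt_consistency_characterization}, comparing $\ov\Rset$-valued minimizers of $R_\phi^\e$ and $R^\e$ would not detect it.
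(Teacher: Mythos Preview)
Your proposal has a genuine gap: you never actually construct the distribution, and the ``main obstacle'' you flag stems from restricting yourself to the wrong class of sequences. You dismiss the fully balanced case $\PP_0=\PP_1$ on the grounds that $\inf_f R^\e(f)=R^\e(0)$ there, concluding ``nothing is gained.'' That is true only for sequences with a fixed sign on each region --- your plan has $f_n<0$ on $K$ and $f_n\to+\infty$ off $K$ --- and for such sequences the balanced case indeed gives $R^\e(f_n)\to\inf_f R^\e(f)$. But nothing forces the sequence to have a single sign per region, and by insisting on this you manufacture the need for off-core excess mass and a non-atomic construction that you then cannot complete.

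The paper's argument exploits precisely the balanced case together with a sign-changing sequence. Take $\PP_0=\PP_1$ uniform on $\overline{B_R(\zero)}$ with $\e=2R$, so every support point lies within $\e$ of every other, and set $f_n(\zero)=1/n$ and $f_n\equiv-1/n$ elsewhere. Every $\e$-ball centred at a support point then contains both a point where $f_n>0$ and one where $f_n\le 0$, so $S_\e(\one_{f_n\le0})\equiv S_\e(\one_{f_n>0})\equiv 1$ and $R^\e(f_n)=1$ for all $n$, while $\inf_f R^\e(f)=1/2$. On the surrogate side $S_\e(\phi\circ f_n)\equiv S_\e(\phi\circ(-f_n))\equiv\phi(-1/n)$, so $R_\phi^\e(f_n)=\phi(-1/n)\to\phi(0)=C_\phi^*(1/2)=\inf_f R_\phi^\e(f)$ by continuity of $\phi$ and the hypothesis. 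The idea you are missing is that the adversarial $0$-$1$ risk is maximally inflated when $f$ changes sign inside every $\e$-ball, and the hypothesis $C_\phi^*(1/2)=\phi(0)$ is exactly what lets the surrogate be blind to that sign change in the limit. No off-core mass, no non-atomic component, and no appeal to the minimax theorems are needed.
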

    \begin{proof}
        Let $\PP_0=\PP_1$ be the the uniform distribution on the ball $\ov{B_R(\zero)}$ and let $\e=2R$. Let $\phi$ be a loss function for which $\inf_\alpha \phi(\alpha)/2 + \phi(-\alpha)/2 = C_\phi^*(1/2)=\phi(0)$. Notice that $\inf_{f} R^\e(f)\geq \inf_f R(f)$ and $\inf_{f} R_\phi^\e(f)\geq \inf_f R_\phi(f)$. 
        Since $\PP_0 = \PP_1$, the optimal non-adversarial risk is $\inf_f R(f) = 1/2$.
        Moreover, as $C_\phi^*(1/2)=\phi(0)$, the optimal non-adversarial surrogate risk is $\inf_f R_\phi(f)= C_\phi^*(1/2)=\phi(0)$. Thus, for the function $f^*\equiv 0$, $R^\e(f^*)=\inf_f R(f)=1/2$ and $R_\phi^\e(f^*)=\inf_f R_\phi(f)=\phi(0)$. Therefore $f^*$ minimizes both $R_\phi^\e$ and $R^\e$. Now consider the sequence of functions
    \[f_n(\bx)=\begin{cases} \frac 1n  &\bx=0\\
    -\frac 1n &\bx\neq 0\end{cases}\]
     Because $\e=2R$, every point in the support of the distribution can be perturbed to every other point. Thus $S_\e(\phi\circ f_n)(\bx) = \phi(-1/n)$ and $S_\e(\phi \circ -f_n)(\bx)=\phi(-1/n)$. 
     However, $S_\e(\one_{f\leq 0})=1$ and $S_\e(\one_{f>0})=1$. Therefore, $R_\phi^\e(f_n)=\phi(-1/n)$ while $R^\e(f_n)=1$ for all $n$. As $\phi$ is continuous, $\lim_{n\to \infty} R_\phi^\e(f_n)=\phi(0)$. Thus $f_n$ is a minimizing sequence of $R_\phi^\e$ but not of $R^\e$, so $\phi$ is not adversarially consistent.
    \end{proof}

	This example shows that if $C_\phi^*(1/2)=\phi(0)$, then $\phi$ is not adversarially consistent. The main result of this paper is that this is the \emph{only} obstruction to adversarial consistency: $\phi$ is adversarially consistent if and only if $C_\phi^*(1/2)<\phi(0)$.

    We begin by showing that this condition suffices for consistency in the \emph{non-adversarial} setting. Surprisingly, despite the wealth of work on this topic, this condition does not appear to be known.  
    \begin{prop}\label{prop:a_def_consistent}
        If $C_\phi^*(1/2)<\phi(0)$, then $\phi$ is consistent.
    \end{prop}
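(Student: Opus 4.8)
The plan is to verify condition~\ref{it:C_seq_background} of Proposition~\ref{prop:alt_consistency_characterization}: for every fixed $\eta\in[0,1]$, I will show that any minimizing sequence $(\alpha_n)$ of $C_\phi(\eta,\cdot)$ is also a minimizing sequence of $C(\eta,\cdot)$. Since $C_\phi(\eta,-\alpha)=C_\phi(1-\eta,\alpha)$ and $C(\eta,-\alpha)=C(1-\eta,\alpha)$, it suffices to treat $\eta\ge 1/2$. The case $\eta=1/2$ is immediate because $C(1/2,\cdot)\equiv 1/2$, so every sequence is minimizing for $C(1/2,\cdot)$; thus the real work is $\eta>1/2$, where $C^*(\eta)=1-\eta$ and this value is attained exactly on $\{\alpha>0\}$. (The boundary cases $\eta\in\{0,1\}$ are subsumed in the $\eta>1/2$, resp.\ $\eta<1/2$, argument.)

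So fix $\eta>1/2$ and a minimizing sequence $(\alpha_n)$, i.e.\ $C_\phi(\eta,\alpha_n)\to C_\phi^*(\eta)$. Apply Lemma~\ref{lemma:minimizing_seq} to obtain $\delta>0$ and $c>0$ with $\phi(c)<\phi(0)$; the uniform lower bound $C_\phi(\eta,\alpha)\ge C_\phi^*(\eta)+\delta$ for $\alpha\in[-c,c]$, combined with $C_\phi(\eta,\alpha_n)\to C_\phi^*(\eta)$, forces $\alpha_n\notin[-c,c]$ for all large $n$. It then remains to rule out the ``wrong side'', namely to show that only finitely many $\alpha_n$ satisfy $\alpha_n<-c$.

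For this I would use the reflection identity $C_\phi(\eta,-\beta)-C_\phi(\eta,\beta)=(2\eta-1)\bigl(\phi(-\beta)-\phi(\beta)\bigr)$. If $\alpha_n<-c$, set $\beta_n=-\alpha_n>c$; since $\phi$ is non-increasing, $\phi(-\beta_n)\ge\phi(0)$ and $\phi(\beta_n)\le\phi(c)$, hence
\[
C_\phi(\eta,\alpha_n)=C_\phi(\eta,\beta_n)+(2\eta-1)\bigl(\phi(-\beta_n)-\phi(\beta_n)\bigr)\ge C_\phi^*(\eta)+(2\eta-1)\bigl(\phi(0)-\phi(c)\bigr),
\]
and $(2\eta-1)(\phi(0)-\phi(c))>0$ is a fixed positive constant. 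This contradicts $C_\phi(\eta,\alpha_n)\to C_\phi^*(\eta)$ unless $\{n:\alpha_n<-c\}$ is finite. Therefore $\alpha_n>c>0$ eventually, so $C(\eta,\alpha_n)=1-\eta=C^*(\eta)$ for all large $n$, which makes $(\alpha_n)$ a minimizing sequence of $C(\eta,\cdot)$. Proposition~\ref{prop:alt_consistency_characterization} then yields consistency.

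The main obstacle is precisely the ``wrong side'' step: ruling out a minimizing sequence of $C_\phi(\eta,\cdot)$ that keeps returning to the negative half-line. Everything else is bookkeeping, but this step is where the strict inequality $C_\phi^*(1/2)<\phi(0)$ enters (via $\phi(c)<\phi(0)$ from Lemma~\ref{lemma:minimizing_seq}), and it must be handled with a quantitative gap rather than a compactness or limiting argument, since $\alpha_n$ itself may diverge to $+\infty$. One should also confirm that ``minimizing sequence of $C_\phi(\eta,\cdot)$'' in Proposition~\ref{prop:alt_consistency_characterization}\ref{it:C_seq_background} is meant pointwise in $\eta$, which is the standard reading.
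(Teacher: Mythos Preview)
Your argument is correct and follows essentially the same route as the paper: invoke Lemma~\ref{lemma:minimizing_seq} to exclude $[-c,c]$, then eliminate the wrong half-line by comparing $C_\phi(\eta,\alpha)$ with $C_\phi(\eta,-\alpha)$ using $\eta>1/2$ and $\phi(c)<\phi(0)$. The only difference is that the paper verifies the pointwise-\emph{minimizer} criterion (every $\ov\Rset$-minimizer of $C_\phi(\eta,\cdot)$ minimizes $C(\eta,\cdot)$), whereas you verify the minimizing-\emph{sequence} criterion~\ref{it:C_seq_background} directly; both are equivalent to consistency via Proposition~\ref{prop:alt_consistency_characterization}, and your quantitative gap $(2\eta-1)(\phi(0)-\phi(c))$ is the sequence analogue of the paper's strict inequality $C_\phi(\eta,-\alpha)<C_\phi(\eta,\alpha)$ for $\alpha\le -c$. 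One cosmetic remark: the identity $C(\eta,-\alpha)=C(1-\eta,\alpha)$ fails at $\alpha=0$ due to the asymmetric tie-breaking in \eqref{eq:C_def}, but this does not affect your reduction since the actual conclusion for $\eta>1/2$ is ``$\alpha_n>c$ eventually'', which transfers cleanly to $\eta<1/2$.
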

    See Appendix~\ref{app:further_properties_of_losses} for a proof.

    Again, some losses that satisfy this property are the $\rho$-margin loss $\phi_\rho(\alpha)=\min(1,\max( 1-\alpha/\rho,0))$ and the the shifted sigmoid loss proposed by \citet{MeunierEttedguietal22}, $\phi(\alpha)=1/(1+\exp(\alpha-\tau))$, $\tau>0$. 
    (In fact, one can show that the class of shifted odd losses proposed by \citet{MeunierEttedguietal22} satisfy $C_\phi^*(1/2)<\phi(0)$.)

    Notice that all convex losses satisfy $C_\phi^*(1/2)=\phi(0)$:
    \[C_\phi^*(1/2)=\inf_\alpha \frac 12 \phi(\alpha) +\frac 12 \phi(-\alpha)\geq \phi(0)\]
    The opposite inequality follows from the observation that $C_\phi^*(1/2)\leq C_\phi(1/2,0)=\phi(0)$. In contrast, recall that a convex loss $\phi$ with $\phi'(0)<0$ is consistent \citep{BartlettJordanMcAuliffe2006}. 

    As conjectured by prior work \cite{bao2021calibrated,MeunierEttedguietal22}, the fundamental reason losses with $C_\phi^*(1/2)<\phi(0)$ are adversarially consistent is that minimizers of $C_\phi(\eta,\cdot)$ are uniformly bounded away from 0 for all $\eta$:
    \begin{lemma}\label{lemma:a_def_main}   
        The loss $\phi$ satisfies $C_\phi^*(1/2)<\phi(0)$ iff there is an $a>0$ for which any minimizer $\alpha^*$ of $C_\phi(\eta,\cdot)$ satisfies $|\alpha|\geq a$.   
    \end{lemma}

    See \ref{app:further_properties_of_losses} for a proof. Concretely, one can show that for the $\rho$-margin loss $\phi_\rho$, a minimizer $\alpha^*$ of $C_{\phi_\rho}(\eta,\cdot)$ must satisfy $|\alpha^*|\geq \rho$. Similarly, a minimizer $\alpha^*$ of $C_{\phi_\tau}(\eta,\cdot)$ of the shifted sigmoid loss $\phi_\tau=1/(1+\exp(\alpha-\tau))$, $\tau>0$ is always either $-\infty$ or $+\infty$.
    In \ref{sec:adversarially_consistent_losses}, we use this property to show that minimizing sequences of $R_\phi^\e$ must be uniformly bounded away from zero, thus ruling out the counterexample presented in Proposition~\ref{prop:convex_consistency_counterexample}.

\subsection{Minimax Theorems for Adversarial Risks}\label{sec:existence_minimax}
    We study the consistency of $\phi$ by by comparing minimizing sequences of $R^\e_\phi$ with those of $R^\e$. 
        In the next section, in order to compare these minimizing sequences, we will attempt to re-write the adversarial loss in a `pointwise' manner similar to 
        Proposition~\ref{prop:alt_consistency_characterization}. In order to achieve this representation of the adversarial loss, we apply minimax and complimentary slackness theorems from \citep{PydiJog2021,FrankNilesWeed23minimax}. 
        
        Before presenting these results, we introduce the $\infty$-Wasserstein metric from optimal transport. 
        For two finite probability measures $\QQ,\QQ'$ satisfying $\QQ(\Rset^d)=\QQ'(\Rset^d)$, let $\Pi(\QQ,\QQ')$ be the set of \emph{couplings} between $\QQ$ and $\QQ'$:
        \[\Pi(\QQ,\QQ')=\{ \gamma: \text{measure on }\Rset^d\times \Rset^d\text{ with } \gamma(A\times \Rset^d)=\QQ(A), \gamma(\Rset^d\times A)=\QQ'(A)\}\]
        
        The distance between $\QQ'$ and $\QQ$ in the Wasserstein $\infty$-metric $W_\infty$ is defined as
        \[W_\infty(\QQ,\QQ')=\inf_{\gamma \in \Pi(\QQ,\QQ')} \esssup_{(\bx,\by)\sim \gamma} \|\bx-\by\|. \]
        The $W_\infty$ distance is in fact a metric on the space of measures. We denote the $\infty$-Wasserstein ball around a measure $\QQ$ by 
        \[\Wball \e (\QQ)=\{\QQ'\colon \QQ' \text{ Borel},\quad W_\infty(\QQ,\QQ')\leq \e \}\]
        Informally, the measure $\QQ'$ is in $\Wball \e (\QQ)$ if perturbing points by at most $\e$ under the measure $\QQ$ can produce $\QQ'$. As a result, Wasserstein $\infty$-balls are fairly useful for modeling adversarial attacks. Specifically, one can show:
            \begin{lemma}\label{lemma:S_e_inequality}
	           For any function $g$ and measures $\QQ'$, $\QQ$ with $W_\infty(\QQ',\QQ)\leq \e$, the inequality $\int S_\e(g)d\QQ\geq \int gd\QQ'$ holds.
            \end{lemma}
        
        See Appendix~\ref{app:W_infty} for a proof. 
        
        Minimax theorems from prior work use this framework to introduce dual problems to the adversarial classification risks \eqref{eq:adv_zero_one_loss_sup} and \eqref{eq:adv_phi_loss}. Let $\PP_0',\PP_1'$ be finite Borel measures and define
        \begin{equation}\label{eq:cdl_def}
            \cdl(\PP_0',\PP_1')=\int C^*\left(\frac{d \PP_1'}{d(\PP_0'+\PP_1')}\right)d(\PP_0'+\PP_1')
        \end{equation}
        where $C^*$ is defined by \eqref{eq:C_def}. The next theorem states that maximizing $\cdl$ over $W_\infty$ balls is in fact a dual problem to minimizing $\cprm$.
    \begin{theorem}\label{th:minimax_classification}
        Let $\cdl$ be defined by \eqref{eq:cdl_def}. 
	        \begin{equation}\label{eq:minimax_classification}
	           \inf_{\substack{f\text{ Borel}\\  \text{ $ \Rset$-valued}}}\cprm(f)=\sup_{\substack{\PP_0'\in\Wball \e (\PP_0)\\ \PP_1'\in \Wball \e (\PP_1)}}\cdl(\PP_0',\PP_1')
	        \end{equation}
	        and furthermore equality is attained for some Borel measurable $\hat f$ and 
         $\hat \PP_1,\hat \PP_0$ with $W_\infty(\hat \PP_0,\PP_0)\leq \e$ and $W_\infty(\hat \PP_1,\PP_1)\leq \e$.   
    \end{theorem}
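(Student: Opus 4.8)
The plan is to prove the two inequalities in~\eqref{eq:minimax_classification} separately. The bound ``$\geq$'' (weak duality) is elementary. The reverse bound, together with the two attainment claims, is the substantial part: first I would show the dual supremum is attained, and then I would build a primal minimizer $\hat f$ out of a dual optimizer by a complementary-slackness argument. This is essentially the route of~\citep{PydiJog2021} for the $0$-$1$ loss, supplemented by the measurable-selection and existence tools of~\citep{FrankNilesWeed23minimax}.

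For weak duality I would fix a Borel $\Rset$-valued $f$ and measures $\PP_0'\in\Wball\e(\PP_0)$, $\PP_1'\in\Wball\e(\PP_1)$, and choose couplings $\gamma_i\in\Pi(\PP_i,\PP_i')$ with $\|\bx-\by\|\leq\e$ for $\gamma_i$-a.e.\ $(\bx,\by)$. Since $\one_{f(\by)\leq 0}\leq S_\e(\one_{f\leq 0})(\bx)$ whenever $\|\bx-\by\|\leq\e$, integrating against $\gamma_1$ gives $\int S_\e(\one_{f\leq 0})\,d\PP_1\geq \PP_1'(\{f\leq 0\})$, and likewise $\int S_\e(\one_{f>0})\,d\PP_0\geq \PP_0'(\{f>0\})$. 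Writing $\PP'=\PP_0'+\PP_1'$ and $\eta'=d\PP_1'/d\PP'$ and minimizing $C(\eta',\cdot)$ pointwise,
\[
\cprm(f)\ \geq\ \PP_1'(\{f\leq 0\})+\PP_0'(\{f>0\})\ =\ \int C(\eta',f)\,d\PP'\ \geq\ \int C^*(\eta')\,d\PP'\ =\ \cdl(\PP_0',\PP_1'),
\]
and taking $\inf_f$ and then $\sup_{\PP_0',\PP_1'}$ yields ``$\geq$''.

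For attainment of the dual I would use the identity $\cdl(\PP_0',\PP_1')=\|\PP_0'\wedge\PP_1'\|=\tfrac12\big(\PP_0'(\Rset^d)+\PP_1'(\Rset^d)-\|\PP_0'-\PP_1'\|_{\mathrm{TV}}\big)$, where $\PP_0'\wedge\PP_1'$ is the largest measure below both. Each ball $\Wball\e(\PP_i)$ is tight (from $\PP_i'(A)\leq\PP_i(\{\bx:\dist(\bx,A)\leq\e\})$ together with tightness of $\PP_i$ on $\Rset^d$) and closed under weak convergence (pass an $\e$-bounded coupling to a subsequential weak limit, using that $\{\|\bx-\by\|\le\e\}$ is closed), hence weakly compact by Prokhorov's theorem. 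Moreover $\cdl$ is weakly upper semicontinuous on $\Wball\e(\PP_0)\times\Wball\e(\PP_1)$, since the total masses are constant there and $(\mu,\nu)\mapsto\|\mu-\nu\|_{\mathrm{TV}}$ is weakly lower semicontinuous (a supremum of the weakly continuous functionals $\int g\,d(\mu-\nu)$ over $g\in C_b$ with $\|g\|_\infty\le1$). So the supremum in~\eqref{eq:minimax_classification} is attained at some $\hat\PP_0,\hat\PP_1$.

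The hard part is strong duality together with the existence of $\hat f$. Given a dual optimizer $\hat\PP_0,\hat\PP_1$ with $\e$-bounded couplings $\gamma_i\in\Pi(\PP_i,\hat\PP_i)$, I would set $\hat\eta=d\hat\PP_1/d(\hat\PP_0+\hat\PP_1)$ and take $\hat f$ to be a Borel function with $\{\hat f>0\}=\{\hat\eta>1/2\}$, so that $\int C(\hat\eta,\hat f)\,d(\hat\PP_0+\hat\PP_1)=\cdl(\hat\PP_0,\hat\PP_1)$. Since weak duality already gives $\inf_f\cprm(f)\geq\cdl(\hat\PP_0,\hat\PP_1)$, it would suffice to prove $\cprm(\hat f)\leq\cdl(\hat\PP_0,\hat\PP_1)$; equality would then hold throughout~\eqref{eq:minimax_classification} and $\hat f$ would be a minimizer. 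This reduces to the bounds $\int S_\e(\one_{\hat f\leq 0})\,d\PP_1\leq\hat\PP_1(\{\hat f\leq 0\})$ and $\int S_\e(\one_{\hat f>0})\,d\PP_0\leq\hat\PP_0(\{\hat f>0\})$, whose sum is exactly $\cdl(\hat\PP_0,\hat\PP_1)$. Each is a complementary-slackness property of the dual optimum: if positive $\PP_1$-mass could be transported within distance $\e$ into $\{\hat f\leq 0\}$ beyond what $\hat\PP_1$ already places there, one could re-route that mass --- from a region where $\hat\eta>1/2$ (where $\hat\PP_1$ exceeds $\hat\PP_0$) into one where $\hat\eta\leq1/2$ --- and strictly increase the overlap $\cdl$, contradicting optimality of $(\hat\PP_0,\hat\PP_1)$; the symmetric statement handles $\PP_0$ and $\{\hat f>0\}$. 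I expect this re-routing argument to be the main obstacle: it is exactly where one exploits the good interaction between $W_\infty$ balls and the $\e$-fattening operation, and it requires careful measure-theoretic bookkeeping; the attendant measurability issues (that $S_\e$ of a Borel function need not be Borel, and the measurable selection of $\hat f$) I would handle as in~\citep[Sections~3--4 and Appendix~A]{FrankNilesWeed23minimax}.
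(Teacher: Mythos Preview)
Your weak-duality argument is correct and matches the paper's Lemma~\ref{lemma:weak_duality}. The rest of your proposal, however, takes a genuinely different route from the paper, and the route you chose is the harder one.

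The paper does not establish dual attainment by a compactness/upper-semicontinuity argument, nor does it build a primal optimizer via a re-routing/complementary-slackness argument. Instead it reduces the entire statement to the surrogate minimax theorem (Theorem~\ref{th:minimax}, already available from~\citep{FrankNilesWeed23minimax}) applied to the $\rho$-margin loss $\phi_\rho$. The two observations are: (i) $C^*_{\phi_\rho}=C^*$, so $\bar R_{\phi_\rho}=\bar R$ as functionals on pairs of measures; and (ii) $\phi_\rho(\alpha)\geq\one_{\alpha\leq 0}$ and $\phi_\rho(-\alpha)\geq\one_{\alpha>0}$, so $R^\e_{\phi_\rho}(f)\geq R^\e(f)$ for every $f$. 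Theorem~\ref{th:minimax} supplies an $\ov\Rset$-valued $f^*$ and a pair $(\PP_0^*,\PP_1^*)$ with $R^\e_{\phi_\rho}(f^*)=\bar R_{\phi_\rho}(\PP_0^*,\PP_1^*)$. Combining with weak duality,
\[
\bar R(\PP_0^*,\PP_1^*)=\bar R_{\phi_\rho}(\PP_0^*,\PP_1^*)\ \leq\ R^\e(f^*)\ \leq\ R^\e_{\phi_\rho}(f^*)\ =\ \bar R_{\phi_\rho}(\PP_0^*,\PP_1^*),
\]
so every inequality is an equality; this gives~\eqref{eq:minimax_classification} and both attainment claims simultaneously, and truncating $f^*$ to $[-1,1]$ yields the $\Rset$-valued minimizer $\hat f$. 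No Prokhorov compactness, no TV semicontinuity, no re-routing.

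Your direct approach is essentially that of~\citep{PydiJog2021} and can be made to work, but at much greater cost; the paper's reduction buys all of the hard existence/minimax content from Theorem~\ref{th:minimax} for free. Two remarks on your sketch: first, the identity $\|\mu-\nu\|_{\mathrm{TV}}=\sup_{g\in C_b,\ \|g\|_\infty\le 1}\int g\,d(\mu-\nu)$ that you use for lower semicontinuity is true on $\Rset^d$ but relies on regularity of finite Borel measures and deserves a word of justification. Second, the re-routing argument as stated has a gap: moving $\hat\PP_1$-mass from $\{\hat\eta>1/2\}$ to a point where $\hat\eta=1/2$ does \emph{not} strictly increase $\|\hat\PP_0\wedge\hat\PP_1\|$, so optimality of $(\hat\PP_0,\hat\PP_1)$ gives no contradiction on the set $\{\hat\eta=1/2\}$. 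Handling that boundary set carefully is precisely the ``main obstacle'' you anticipate, and it is what the paper's reduction to Theorem~\ref{th:minimax} entirely sidesteps.
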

        The first to show such a theorem was \citet{PydiJog2021}. In comparison to their Theorem 8, Theorem~\ref{th:minimax_classification} removes the assumption that $\PP_0,\PP_1$ are absolutely continuous with respect to Lebesgue measure and shows that the minimizer $\hat f$ is in fact Borel. We prove this theorem in Appendix~\ref{app:minimax_classification_proof}.
    \citet{FrankNilesWeed23minimax} prove a similar statement for the surrogate risk $R_\phi^\e$.
    This time, the dual objective is
    \begin{equation}\label{eq:dl_def}
        \dl(\PP_0',\PP_1')=\int C_\phi^*\left(\frac{d\PP_1'}{d(\PP_0'+\PP_1')}\right) d(\PP_0'+\PP_1')
    \end{equation}
    with $C_\phi^*$ defined by \eqref{eq:C_phi_def}.
    
    \begin{theorem}\label{th:minimax}
    Assume that Assumption~\ref{as:phi} holds, and define $\dl$ by \eqref{eq:dl_def}. Then
	        
	        \begin{equation}\label{eq:minimax}
	            \inf_{\substack{f \text{ Borel,}\\ f\text{ $\Rset$-valued}}} \prm(f)=\sup_{\substack{\PP_0'\in\Wball \e (\PP_0)\\ \PP_1'\in \Wball \e (\PP_1)}}  \dl(\PP_0',\PP_1')
	        \end{equation}
	        and furthermore equality in the dual problem is attained for some  
         $\PP_1^*,\PP_0^*$ with $W_\infty(\PP_0^*,\PP_0)\leq \e$ and $W_\infty(\PP_1^*,\PP_1)\leq \e$.
	    \end{theorem}

        \citet{FrankNilesWeed23minimax} proved this statement in Theorem~6 but with the infimum taken over $\ov \Rset$-valued functions.
       To extend the result to $\Rset$-valued functions as in Theorem~\ref{th:minimax}, we show that $\inf_{f\text{ Borel},f\text{ $\ov \Rset$-valued}} R_\phi^\e(f)= \inf_{f\text{ Borel},f\text{ $\Rset$-valued}} R_\phi^\e(f)$ in Appendix~\ref{app:Rset-approximation}.


\section{Adversarially Consistent Losses}\label{sec:adversarially_consistent_losses}
This section contains our main results on adversarial consistency. In light of Proposition~\ref{prop:convex_consistency_counterexample}, our main task is to show that a loss satisfying $C_\phi^*(1/2) < \phi(0)$ is adversarially consistent.

At a high level, we will show that every minimizing sequence of $R_\phi^\e$ must also minimize $R^\e$.
However, directly analyzing minimizing sequences $\{f_n\}$ of $R_\phi^\e$ and $R^\e$ is challenging due to the supremums in the definitions of the adversarial risks.
We therefore develop alternate characterizations of minimizing sequences to both functionals, based on complimentary slackness conditions derived from the convex duality results of Section~3.2.
However, unlike standard complementary slackness conditions well known from convex optimization, these theorems allow us to characterize minimizing sequences as well as minimizers.

\subsection{Approximate Complimentary Slackness}
We first state this slackness result for the surrogate case, due to \citet[Lemmas 16 and 26]{FrankNilesWeed23minimax} and Theorem~\ref{th:minimax}.
\begin{prop}\label{prop:approx_complimentary_slackness_phi}
	Let $(\PP_0^*,\PP_1^*)$ be any maximizers of $\dl$ over $\Wball \e (\PP_i)$. 
	Define $\PP^*=\PP_0^*+\PP_1^*$, $\eta^*=d\PP_1^*/d\PP^*$.
 	If $f_n$ is a minimizing sequence for $\prm$, then the following hold:
  		\begin{equation}\label{eq:C_comp_slack_approx}
			\lim_{n\to \infty} \int C_\phi(\eta^*,f_n)d\PP^*= \int C_\phi^*(\eta^*)d\PP^*.
		\end{equation}
		\begin{equation}\label{eq:sup_comp_slack_approx}
			\lim_{n\to \infty} \int S_\e(\phi \circ f_n)d\PP_1-\int \phi \circ f_nd\PP_1^*	=0,\quad 	     \lim_{n\to \infty} \int S_\e(\phi \circ -f_n) d\PP_0- \int \phi \circ -f_nd\PP_0^*=0
		\end{equation}

\end{prop}
\begin{proof}
    Let $R^\e_{\phi,*}$ be the minimal value of $R_\phi^\e$ and choose a $\delta>0$. Then for sufficiently large $N$, $n\geq N$ implies that $R_\phi^\e(f_n)\leq R^\e_{\phi,*}+\delta$. Lemma~\ref{lemma:S_e_inequality} and the definition of $C_\phi^*$ in \eqref{eq:C_phi_def} further imply that  
    \begin{equation}\label{eq:approx_c_slack_1}
        R^\e_{\phi,*}+\delta \geq \int S_\e(\phi \circ f_n)d\PP_1+\int S_\e(\phi \circ -f_n) d\PP_0\geq \int \phi \circ f_n d\PP_1^*+\int \phi \circ -f_n d\PP_0^*\geq R^\e_{\phi,*}    
    \end{equation}
    
    As $R^\e_{\phi,*}= \int C_\phi^*(\eta^*)d\PP^*$, this relation immediately implies \eqref{eq:C_comp_slack_approx}. 
    
        Next, Lemma~\ref{lemma:S_e_inequality} again implies that 
        \begin{equation}\label{eq:S_e_ineq_consequence}
            \int S_\e(\phi \circ f_n)d\PP_1\geq \int \phi \circ f_nd\PP_1^* \quad\text{and}\quad \int S_\e(\phi \circ -f_n)d\PP_0\geq \int \phi \circ -f_n d\PP_0^*     
        \end{equation}
         while \eqref{eq:approx_c_slack_1} implies that 
        \[R_{\phi,*}^\e- \int \phi \circ f_n d\PP_1^*+\int \phi \circ -f_n d\PP_0^*\leq 0.\] 

        Therefore, subtracting $\int \phi \circ f_n d\PP_1^*+\int \phi \circ -f_n d\PP_0^*$ from \eqref{eq:approx_c_slack_1} results in 
    \begin{equation}\label{eq:approx_comp_slack_2}
        \delta \geq \left(\int S_\e(\phi \circ f_n)d\PP_1 - \int \phi \circ f_n d\PP_1^*\right)+\left( \int S_\e(\phi \circ -f_n) d\PP_0-\int \phi \circ -f_n d\PP_0^*\right)\geq 0.    
    \end{equation}

    Again, \eqref{eq:S_e_ineq_consequence} implies that the quantities on parentheses are both positive which implies \eqref{eq:sup_comp_slack_approx}.  

\end{proof}

Proposition~\ref{prop:approx_complimentary_slackness_phi} shows that minimizing sequences of $R_\phi^\e$ satisfy two properties: 1) The sequence $\{f_n\}$ must minimize the \emph{standard} $\phi$-risk $R_\phi$ with measures $\PP_0^*$, $\PP_1^*$ in place of $\PP_0,\PP_1$, 2) At the limit, the measures $\PP_0^*,\PP_1^*$ are best adversarial attacks on $\phi\circ f_n,\phi \circ -f_n$. In fact, one can show that $\{f_n\}$ is a minimizing sequence of $R_\phi^\e$ \emph{if and only if} it satisfies these properties. Crucially, a very similar characterization holds for minimizers of the adversarial classification loss.
We state and prove the `only if' direction of this characterization in Proposition~\ref{prop:approx_complimentary_slackness_classification}.

\begin{prop}\label{prop:approx_complimentary_slackness_classification}
    Let $f_n$ be a sequence and let $\PP_0^*$, $\PP_1^*$ be measures in $\Wball \e (\PP_i)$. Define $\PP^*=\PP_0^*+\PP_1^*$, $\eta^*=d\PP_1^*/d\PP^*$. If the following two conditions hold:
        \begin{equation}\label{eq:C_comp_slack_approx_classification}
        	\lim_{n\to \infty} \int C(\eta^*,f_n)d\PP^*=\int C^*(\eta^*)d\PP^*
    \end{equation}
        \begin{equation}\label{eq:sup_comp_slack_approx_classification}
			\lim_{n\to \infty} \int S_\e(\one_{f_n\leq 0})d\PP_1-\int  \one_{f_n\leq 0}d\PP_1^*=0,\quad           \lim_{n\to \infty} \int S_\e(\one_{f_n>0})d\PP_0-\int \one_{f_n>0}d\PP_0^*=0,\end{equation}
    then $f_n$ is a minimizing sequence of $R^\e$.
\end{prop}
\begin{proof}
        Equation~\ref{eq:C_comp_slack_approx_classification} implies that the limit $\lim_{n\to \infty} C(\eta^*,f_n)d\PP^*$ exists. Thus \eqref{eq:C_comp_slack_approx_classification} and \eqref{eq:sup_comp_slack_approx_classification} imply that
    \begin{align*}
        \lim_{n\to \infty} R^\e(f_n)&=\lim_{n\to \infty} \int S_\e(\one_{f_n\leq 0})d\PP_1+\int S_\e(\one_{f_n>0})d\PP_0
        =\lim_{n\to \infty} \int \one_{f_n\leq 0}d\PP_1^*+\int \one_{f_n>0}d\PP_0^*\\
        &=\lim_{n\to \infty}\int C(\eta^*,f_n)d\PP^* 
         =\int C^*(\eta^*)d\PP^* 
        = \bar R(\PP_0^*, \PP_1^*)\,.
    \end{align*}
    
    Therefore, 
    Strong duality (Theorem~\ref{th:minimax_classification}) then implies that 
    \[\lim_{n\to \infty} R^\e(f_n)\leq \sup_{\substack{\PP_0'\in\Wball \e (\PP_0)\\ \PP_1'\in \Wball \e (\PP_1)}}\cdl(\PP_0',\PP_1')=\inf_{\substack{f\text{ Borel}\\  \text{ $ \Rset$-valued}}}\cprm(f)
	        \]
    and therefore, $f_n$ is a minimizing sequence.
\end{proof}

We end this section by comparing the different criteria for consistency presented in Proposition~\ref{prop:alt_consistency_characterization} with Propositions~\ref{prop:approx_complimentary_slackness_phi} and~\ref{prop:approx_complimentary_slackness_classification}. Together, Propositions~\ref{prop:approx_complimentary_slackness_phi} and~\ref{prop:approx_complimentary_slackness_classification} will allow us to compare minimizing sequences of $\prm$ to those of $R^\e$ by showing that any sequence satisfying~\eqref{eq:C_comp_slack_approx}--\eqref{eq:sup_comp_slack_approx} must also satisfy~\eqref{eq:C_comp_slack_approx_classification}--\eqref{eq:sup_comp_slack_approx_classification}.
This statement is the analog to \ref{it:C_seq_background} of Proposition~\ref{prop:alt_consistency_characterization}.
Indeed, because $C_\phi(\eta^*,f_n)\geq C_\phi^*(\eta^*)$, \eqref{eq:C_comp_slack_approx} is actually equivalent to to $C_\phi(\eta^*,f_n)\to C_\phi^*(\eta^*)$ in $L^1(\PP^*)$. However, the extra criterion \eqref{eq:sup_comp_slack_approx_classification} implies an additional constraint on the structure of the minimizing sequence.
This additional constraint is the reason \ref{it:R_min_background} of Proposition~\ref{prop:alt_consistency_characterization} is false in the adversarial setting. In the restricted situation where $\dl = \bar R$, \citet{MeunierEttedguietal22} show that \eqref{eq:C_comp_slack_approx} implies \eqref{eq:C_comp_slack_approx_classification} (Proposition~4.2). However, this observation does not suffice to conclude consistency.

\subsection{Adversarial Consistency}
We are now in a position to prove consistency.
Before presenting the full proof, we pause to discuss the overall strategy.
Consistency will follow from three considerations. First, every minimizing sequence of $R_\phi^\e$ satisfies conditions \eqref{eq:C_comp_slack_approx} and \eqref{eq:sup_comp_slack_approx}.
Second, conditions \eqref{eq:C_comp_slack_approx} and \eqref{eq:sup_comp_slack_approx} imply the very similar conditions \eqref{eq:C_comp_slack_approx_classification} and \eqref{eq:sup_comp_slack_approx_classification}.
Finally, any function sequence satisfying  \eqref{eq:C_comp_slack_approx_classification} and \eqref{eq:sup_comp_slack_approx_classification} must be a minimizing sequence to $R^\e$.
The first and last steps are the content of Propositions~\ref{prop:approx_complimentary_slackness_phi} and~\ref{prop:approx_complimentary_slackness_classification}, so it remains to justify the middle step.

Verifying that \eqref{eq:C_comp_slack_approx} implies \eqref{eq:C_comp_slack_approx_classification} is straightforward. The relation \eqref{eq:C_comp_slack_approx} actually states that $f_n$ minimizes the \emph{standard} surrogate risk with respect to the distribution given by $\PP_0^*$, $\PP_1^*$. Therefore \eqref{eq:C_comp_slack_approx} implies \eqref{eq:C_comp_slack_approx_classification} so long as $\phi$ is consistent.

The main difficulty is verifying \eqref{eq:sup_comp_slack_approx_classification}, due to the discontinuity of $\one_{\alpha<0}$, $\one_{\alpha\geq 0}$ at 0. Due to this discontinuity, one cannot directly argue that \eqref{eq:sup_comp_slack_approx} implies \eqref{eq:sup_comp_slack_approx_classification}: to simplify the discussion, assume that $\phi$ is strictly decreasing on a neighborhood of the origin, in which case $\one_{\alpha<0}=\one_{\phi(\alpha)>\phi(0)}$ and $\one_{\alpha\geq 0}=\one_{\phi(-\alpha)\geq \phi(0)}$. Recall that according to \eqref{eq:sup_comp_slack_approx}, in the limit $n\to \infty$, $\PP_0^*,\PP_1^*$ are the strongest attack in $\Wball \e(\PP_0)\times \Wball \e(\PP_1)$, or informally, $ S_\e(\phi \circ f_n)(\bx)$ approaches $\phi (f_n(\bx'))$ for an optimal perturbation $\bx'$ w.h.p., with a similar condition for $\phi \circ -f_n$. However, due to the discontinuity of $\one_{\phi(-\alpha)\geq \phi(0)}$ at $\phi(0)$, if $f_n(\bx')\to 0$ as $n\to \infty$, this relation does not imply that $\one_{S_\e(\phi\circ -f_n)(\bx)\geq \phi(0)}$ approaches $\one_{\phi\circ-f_n(\bx')\geq 0}$.

Lemma~\ref{lemma:a_def_main} says that if $C_\phi^*(1/2)<\phi(0)$, minimizers of $C_\phi(\eta, \cdot)$ are uniformly bounded away from $0$.
This fact suggests that minimizing sequences will also be bounded away from the origin, which will allow us to avoid the discontinuity there. Concretely, we show:

\begin{lemma}\label{lemma:minimizing_seq}
    Let $C_\phi^*(1/2)<\phi(0)$. Then there is a $\delta>0$ and a $c>0$ with $\phi(c)<\phi(0)$ for which $\alpha\in [-c,c]$ implies $C_\phi(\eta,\alpha)\geq C_\phi^*(\eta)+\delta$, uniformly in $\eta$. Furthermore, for this value of $c$, if $\alpha>c$ then $\phi(\alpha)<\phi(c)$.
\end{lemma}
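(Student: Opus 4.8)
The plan is to turn the single-point hypothesis $C_\phi^*(1/2) < \phi(0)$ into a bound on $C_\phi^*(\eta)$ that is \emph{uniform} over all $\eta \in [0,1]$, and then to exploit that on a symmetric interval $[-c,c]$ the cost $C_\phi(\eta,\alpha)$ is trivially bounded below by $\phi(c)$, which we can force to sit above the uniform bound by choosing $c$ as a suitable level point of $\phi$. So there are really three moves: (i) a uniform upper bound on $C_\phi^*$; (ii) a careful choice of $c$; (iii) a one-line comparison giving $\delta$.

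For (i), set $\gamma := \phi(0) - C_\phi^*(1/2) > 0$, which also forces $\phi(0) > 0$ since $C_\phi^* \ge 0$. By definition of the infimum pick $\alpha_0$ with $C_\phi(1/2,\alpha_0) = \tfrac12\phi(\alpha_0) + \tfrac12\phi(-\alpha_0) < \phi(0) - \gamma/2$. The point I want to make is that for \emph{every} $\eta$, evaluating $C_\phi(\eta,\cdot)$ at the two symmetric points $\alpha_0$ and $-\alpha_0$ and using ``minimum $\le$ average'' together with the cancellation $C_\phi(\eta,\alpha_0) + C_\phi(\eta,-\alpha_0) = \phi(\alpha_0) + \phi(-\alpha_0)$ (the $\eta$'s drop out) gives
\[
C_\phi^*(\eta) \;\le\; \min\{C_\phi(\eta,\alpha_0),\,C_\phi(\eta,-\alpha_0)\} \;\le\; \tfrac12\bigl(\phi(\alpha_0)+\phi(-\alpha_0)\bigr) \;<\; \phi(0) - \tfrac{\gamma}{2}.
\]
Thus $C_\phi^*(\eta) < \phi(0) - \gamma/2$ for all $\eta$. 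This is the crux of the argument; everything else is bookkeeping.

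For (ii) and (iii): for any $v \in (0,\phi(0))$ let $c := \sup\{\alpha : \phi(\alpha) \ge v\}$. Since $\phi$ is continuous, non-increasing, $\phi(0) > v$, and $\phi(\alpha) \to 0 \ne v$ as $\alpha \to \infty$, this $c$ is finite and positive, satisfies $\phi(c) = v$ (by continuity), and $\phi(\alpha) < v = \phi(c)$ for every $\alpha > c$ — which is exactly the ``furthermore'' clause. Now choose $v \in (\max\{0,\phi(0)-\gamma/4\},\,\phi(0))$, a nonempty open interval because $\phi(0)>0$; then $0 < \phi(c) = v < \phi(0)$ and $\phi(c) > \phi(0) - \gamma/4$. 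Finally, if $|\alpha|\le c$ then both $\alpha \le c$ and $-\alpha \le c$, so $\phi(\alpha)\ge\phi(c)$ and $\phi(-\alpha)\ge\phi(c)$, hence $C_\phi(\eta,\alpha) \ge \phi(c) > \phi(0)-\gamma/4$ for all $\eta$. Combining with the uniform bound from (i),
\[
C_\phi(\eta,\alpha) - C_\phi^*(\eta) \;>\; \bigl(\phi(0) - \tfrac{\gamma}{4}\bigr) - \bigl(\phi(0) - \tfrac{\gamma}{2}\bigr) \;=\; \tfrac{\gamma}{4},
\]
so $\delta := \gamma/4$ works.

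The only delicate part is step (i) — seeing that midpoint non-convexity witnessed at a \emph{single} $\alpha_0$ already pins $C_\phi^*$ below $\phi(0)$ uniformly in $\eta$. Step (ii) has a small amount of care, since $c$ must simultaneously satisfy $\phi(c) < \phi(0)$, $\phi(c)$ close enough to $\phi(0)$, and $\phi$ strictly decreasing across $c$ from the right; taking $c$ to be the right endpoint of the level set $\{\phi = v\}$ dispatches all three at once, and the continuity/monotonicity facts used there are immediate from Assumption~\ref{as:phi}. No appeal to the minimax theorems is needed for this lemma.
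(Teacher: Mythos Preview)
Your proof is correct and takes a genuinely different route from the paper's. The paper first proves an auxiliary structural lemma (Lemma~\ref{lemma:a_def_definition}): it identifies $a>0$ as the smallest positive minimizer of $C_\phi(1/2,\cdot)$ and, via the decomposition $C_\phi(\eta,\alpha)=(\eta-\tfrac12)(\phi(\alpha)-\phi(-\alpha))+C_\phi(1/2,\alpha)$, shows that $C_\phi(\eta,\cdot)$ cannot attain its infimum on $(-a,a)$ for \emph{any} $\eta$. It then picks $c\in(0,a)$ as a level point of $\phi$ and invokes compactness of $[0,1]\times[-c,c]$ together with joint continuity of $(\eta,\alpha)\mapsto C_\phi(\eta,\alpha)-C_\phi^*(\eta)$ to extract a positive $\delta$.

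Your argument bypasses both the auxiliary lemma and the compactness step. The symmetrization identity $C_\phi(\eta,\alpha_0)+C_\phi(\eta,-\alpha_0)=\phi(\alpha_0)+\phi(-\alpha_0)$, combined with ``$\min\le$ average'', immediately gives the uniform bound $C_\phi^*(\eta)<\phi(0)-\gamma/2$; pairing this with the trivial lower bound $C_\phi(\eta,\alpha)\ge\phi(c)$ on $[-c,c]$ yields an \emph{explicit} $\delta=\gamma/4$. This is more elementary and more quantitative than the paper's proof, and it sidesteps having to verify continuity of $\eta\mapsto C_\phi^*(\eta)$ at the endpoints of $[0,1]$. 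The paper's route, on the other hand, yields the additional structural information that minimizers of $C_\phi(\eta,\cdot)$ lie outside a fixed interval $(-a,a)$ --- though in the paper that fact is only ever used as a stepping stone to the present lemma, so nothing is lost by your shortcut.
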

We prove this lemma in Appendix~\ref{app:further_properties_of_losses}. Because $C_\phi(\eta^*,f_n)\to C_\phi^*(\eta^*)$ in $L^1(\PP^*)$, Lemma~\ref{lemma:minimizing_seq} implies that 
\begin{equation}
    \lim_{n\to \infty} \PP^*(f_n\in [-c,c])=0.
\end{equation}

This relation is the key fact that allows us to show that \eqref{eq:sup_comp_slack_approx} implies \eqref{eq:sup_comp_slack_approx_classification}. The condition $C_\phi^*(1/2)<\phi(0)$ is essential for this step of the argument. 

Lastly, Lemma~\ref{lemma:S_e_inequality} implies that $\int S_\e(\one_{f_n\geq 0})d\PP_1\geq \int \one_{f_n\geq 0} d\PP_1^*$ and thus to validate  \eqref{eq:sup_comp_slack_approx_classification}, it suffices to verify the opposite inequality in the limit $n\to \infty$.

%

\begin{lemma}\label{lemma:classification_comp_slack_easy}
    Let $f_n$ be a sequence of functions and let $\PP_0^*\in \Wball \e(\PP_0)$, $\PP_1^*\in \Wball \e(\PP_1)$. The equation
       \begin{equation}\label{eq:sup_comp_slack_approx_classification_mod_1}
			\limsup_{n\to \infty} \int S_\e(\one_{f_n\leq 0})d\PP_1\leq \liminf_{n\to \infty}\int  \one_{f_n\leq 0}d\PP_1^*
   \end{equation}
        implies the first relation of \eqref{eq:sup_comp_slack_approx_classification} and
        \begin{equation}\label{eq:sup_comp_slack_approx_classification_mod_0}
			\limsup_{n\to \infty} \int S_\e(\one_{f_n>0})d\PP_0\leq \liminf_{n\to \infty}\int \one_{f_n>0}d\PP_0^*
   \end{equation}
    implies the second relation of \eqref{eq:sup_comp_slack_approx_classification}.
    
\end{lemma}
See Appendix~\ref{app:easy} for a proof. These considerations suffice to prove the main result of this paper:

\begin{theorem}
    The loss $\phi$ is adversarially consistent if and only if $C_\phi^*(1/2)<\phi(0)$.
\end{theorem}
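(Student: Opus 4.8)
The theorem asserts an equivalence, and the ``only if'' direction is already available. Since $\phi$ is non-increasing, $C_\phi^*(1/2)=\inf_\alpha \tfrac12\phi(\alpha)+\tfrac12\phi(-\alpha)\le\phi(0)$ always, so if $C_\phi^*(1/2)<\phi(0)$ fails then $C_\phi^*(1/2)=\phi(0)$ and Proposition~\ref{prop:convex_consistency_counterexample} shows $\phi$ is not adversarially consistent. The plan is therefore to prove the converse: assuming $C_\phi^*(1/2)<\phi(0)$, show every sequence $f_n$ minimizing $\prm$ also minimizes $R^\e$. Fix such an $f_n$, let $(\PP_0^*,\PP_1^*)$ be dual maximizers from Theorem~\ref{th:minimax}, and set $\PP^*=\PP_0^*+\PP_1^*$, $\eta^*=d\PP_1^*/d\PP^*$; Proposition~\ref{prop:approx_complimentary_slackness_phi} supplies the slackness relations \eqref{eq:C_comp_slack_approx} and \eqref{eq:sup_comp_slack_approx}. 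The goal is to verify the hypotheses \eqref{eq:C_comp_slack_approx_classification}--\eqref{eq:sup_comp_slack_approx_classification} of Proposition~\ref{prop:approx_complimentary_slackness_classification}.

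First I would make precise the heuristic that minimizing sequences avoid the origin. Let $c,\delta>0$ be as in Lemma~\ref{lemma:minimizing_seq} and put $\Delta=\phi(0)-\phi(c)>0$. Since $C_\phi(\eta^*,f_n)-C_\phi^*(\eta^*)\ge\delta$ on $\{|f_n|\le c\}$ and is nonnegative everywhere, Markov's inequality and \eqref{eq:C_comp_slack_approx} give $\PP^*(\{|f_n|\le c\})\le\delta^{-1}\int (C_\phi(\eta^*,f_n)-C_\phi^*(\eta^*))\,d\PP^*\to 0$, and hence $\PP_i^*(\{|f_n|\le c\})\to 0$ as well. For \eqref{eq:C_comp_slack_approx_classification}, recall that $C(\eta,\alpha)-C^*(\eta)$ equals $|2\eta-1|$ when $\alpha$ has the wrong sign for $\eta$ (i.e.\ $\alpha\le 0$ with $\eta>1/2$, or $\alpha>0$ with $\eta<1/2$) and $0$ otherwise; using the properties of $c$, one checks the pointwise bound $C_\phi(\eta^*,f_n)-C_\phi^*(\eta^*)\ge\Delta\,(C(\eta^*,f_n)-C^*(\eta^*))$ on $\{|f_n|>c\}$ (it follows from $C_\phi^*(\eta^*)\le C_\phi(\eta^*,-f_n)$ and $(2\eta^*-1)(\phi(f_n)-\phi(-f_n))\ge|2\eta^*-1|\Delta$ in the wrong-sign case, and is trivial otherwise). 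Splitting $\int(C(\eta^*,f_n)-C^*(\eta^*))\,d\PP^*$ over $\{|f_n|\le c\}$, where the integrand is at most $1$ and the set has $\PP^*$-measure $\to 0$, and over $\{|f_n|>c\}$, where the integrand is at most $\Delta^{-1}(C_\phi(\eta^*,f_n)-C_\phi^*(\eta^*))$ with integral $\to 0$ by \eqref{eq:C_comp_slack_approx}, establishes \eqref{eq:C_comp_slack_approx_classification}.

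The substantive step is \eqref{eq:sup_comp_slack_approx_classification}, and by Proposition~\ref{prop:classification_comp_slack_easy} it suffices to prove the inequalities \eqref{eq:sup_comp_slack_approx_classification_mod_1}--\eqref{eq:sup_comp_slack_approx_classification_mod_0}. Since $W_\infty(\PP_1,\PP_1^*)\le\e$, fix a coupling $\gamma\in\Pi(\PP_1,\PP_1^*)$ with $\|\bx-\by\|\le\e$ for $\gamma$-a.e.\ $(\bx,\by)$, so that $\int S_\e(\one_{f_n\le 0})\,d\PP_1=\gamma\bigl(S_\e(\one_{f_n\le 0})(\bx)=1\bigr)$; I would split this event by whether the coupled point satisfies $f_n(\by)>c$. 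On $\{S_\e(\one_{f_n\le 0})(\bx)=1,\ f_n(\by)>c\}$ we have $S_\e(\phi\circ f_n)(\bx)\ge\phi(0)$ and $\phi(f_n(\by))<\phi(c)$, so the integrand $S_\e(\phi\circ f_n)(\bx)-\phi(f_n(\by))$ --- which is nonnegative $\gamma$-a.e.\ (Lemma~\ref{lemma:S_e_inequality}) and whose $\gamma$-integral is $\int S_\e(\phi\circ f_n)\,d\PP_1-\int\phi\circ f_n\,d\PP_1^*\to 0$ by \eqref{eq:sup_comp_slack_approx} --- is at least $\Delta$ there, forcing that event to have $\gamma$-measure $\to 0$. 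The complementary event lies in $\{f_n(\by)\le c\}$, of $\gamma$-measure $\PP_1^*(\{f_n\le 0\})+\PP_1^*(\{0<f_n\le c\})=\int\one_{f_n\le 0}\,d\PP_1^*+o(1)$ by the first step. Combined with $\int S_\e(\one_{f_n\le 0})\,d\PP_1\ge\int\one_{f_n\le 0}\,d\PP_1^*$ (Lemma~\ref{lemma:S_e_inequality}), this gives $\int S_\e(\one_{f_n\le 0})\,d\PP_1-\int\one_{f_n\le 0}\,d\PP_1^*\to 0$, and symmetrically $\int S_\e(\one_{f_n>0})\,d\PP_0-\int\one_{f_n>0}\,d\PP_0^*\to 0$. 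Adding these, using $\int\one_{f_n\le 0}\,d\PP_1^*+\int\one_{f_n>0}\,d\PP_0^*=\int C(\eta^*,f_n)\,d\PP^*$ and \eqref{eq:C_comp_slack_approx_classification}, yields $R^\e(f_n)\to\int C^*(\eta^*)\,d\PP^*=\bar R(\PP_0^*,\PP_1^*)$; since Theorem~\ref{th:minimax_classification} gives $\bar R(\PP_0^*,\PP_1^*)\le\inf_f R^\e(f)\le\lim_n R^\e(f_n)$, equality holds throughout and $f_n$ minimizes $R^\e$. (Equivalently, after passing to a subsequence along which $\int\one_{f_n\le 0}\,d\PP_1^*$ and $\int\one_{f_n>0}\,d\PP_0^*$ converge, one may invoke Propositions~\ref{prop:classification_comp_slack_easy} and~\ref{prop:approx_complimentary_slackness_classification} directly.)

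I expect the third step to be the main obstacle: converting the ``avoids the origin'' conclusion, which naturally lives on the dual measure $\PP^*$, into genuine control of the supremum-based quantities $\int S_\e(\one_{f_n\le 0})\,d\PP_1$ via an optimal $W_\infty$-coupling, all while handling the discontinuity of $\one_{\alpha\le 0}$ at $\alpha=0$. That discontinuity is exactly what the strict gap $\phi(c)<\phi(0)$ and the tail property $\phi(\alpha)<\phi(c)$ for $\alpha>c$ from Lemma~\ref{lemma:minimizing_seq} are there to absorb. The remaining wrinkles --- measurability of $S_\e$ applied to a Borel function, and the existence of the auxiliary limits --- are routine, the former already handled in the minimax machinery of \citep{FrankNilesWeed23minimax} and the latter by a subsequence argument.
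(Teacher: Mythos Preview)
Your proof is correct and follows essentially the same strategy as the paper: invoke Proposition~\ref{prop:approx_complimentary_slackness_phi} for the surrogate slackness relations, use Lemma~\ref{lemma:minimizing_seq} together with Markov to obtain $\PP^*(|f_n|\le c)\to 0$, and then pass through a $W_\infty$-coupling to establish the classification slackness conditions of Proposition~\ref{prop:approx_complimentary_slackness_classification}. Your execution differs only in minor, arguably cleaner, details --- you obtain \eqref{eq:C_comp_slack_approx_classification} via the direct pointwise bound $C_\phi-C_\phi^*\ge\Delta\,(C-C^*)$ on $\{|f_n|>c\}$ instead of invoking non-adversarial consistency (Proposition~\ref{prop:a_def_consistent}), and you close by computing $R^\e(f_n)\to\bar R(\PP_0^*,\PP_1^*)$ directly rather than routing through Proposition~\ref{prop:classification_comp_slack_easy} --- but the substance matches the paper's proof.
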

\begin{proof}
    The `only if' portion of the statement is Proposition~\ref{prop:convex_consistency_counterexample}.
    
    To show the `if' statement, recall the standard analysis fact: $\lim_{n\to \infty} a_n=a$ iff for all subsequences $\{a_{n_j}\}$ of $\{a_n\}$, there is a further subsequence $a_{n_{j_k}}$  for which $\lim_{k\to \infty} a_{n_{j_k}}=a$. This result implies that to prove $\prm$ is consistent, it suffices to show that every minimizing sequence $f_n$ of $R_\phi^\e$ has a subsequence $f_{n_j}$ that minimizes $\cprm$.
    
    Let $f_n$ be a minimizing sequence of $\prm$. For convenience, pick a subsequence $f_{n_j}$ for which the limits $\lim_{j\to \infty} \int S_\e(\one_{f_{n_j}<0})d\PP_0$, $\lim_{j\to \infty} \int S_\e(\one_{f_{n_j}\geq 0})d\PP_1$ both exist. For notational clarity, we drop the $\empty_j$ subscript and denote this sequence as $f_n$.
    
    By Proposition~\ref{prop:approx_complimentary_slackness_phi}, the equations \eqref{eq:C_comp_slack_approx} and \eqref{eq:sup_comp_slack_approx} hold. We will argue that $f_n$ is in fact a minimizing sequence of $R^\e$ by verifying the conditions of Proposition~\ref{prop:approx_complimentary_slackness_classification}.
    
    First, the relation \eqref{eq:C_comp_slack_approx} states that the sequence $f_n$ minimizes the \emph{standard} $\phi$-risk for the distribution given by $\PP_0^*$ and $\PP_1^*$. As the loss $\phi$ is consistent by Proposition~\ref{prop:a_def_consistent}, the sequence $f_n$ must minimize the standard classification risk for the distribution $\PP_0^*,\PP_1^*$. This statement implies \eqref{eq:C_comp_slack_approx_classification}.
    Next we will argue that \eqref{eq:sup_comp_slack_approx_classification} holds.

    Let $c,\delta$ be as in Lemma~\ref{lemma:minimizing_seq}. Because $C_\phi(\eta^*,f_n)\geq C_\phi^*(\eta^*)$, \eqref{eq:C_comp_slack_approx} implies that $C_\phi(\eta^*,f_n)$ converges to $C_\phi^*(\eta^*)$ in $L^1$. However, $L^1$ convergence implies convergence in measure (see for instance Proposition~2.29 of \citep{folland}), and therefore $\lim_{n\to \infty} \PP^*\big( C_\phi(\eta^*,f_n)>C_\phi^*(\eta^*)+\delta\big)=0$. Lemma~\ref{lemma:minimizing_seq} then implies that for $i=0,1$ 
    \begin{equation}\label{eq:delta_measure_limit}
        \lim_{n\to \infty}\PP_i^*( f_n\in [-c,c]) =0.
    \end{equation}
 
    Next, because $\phi$ is non-increasing, $f\leq 0$ implies $\phi(f)\geq \phi(0)$ and thus $\one_{f\leq 0}\leq \one_{\phi \circ f\geq \phi(0)}$. Furthermore, as the function $\alpha \mapsto \one_{\alpha\geq 0}$ is monotone and upper semi-continuous,
    \begin{equation}\label{eq:PP_1_pre_limit}
        \int S_\e(\one_{f_n\leq 0})d\PP_1 \leq \int S_\e(\one_{\phi \circ f_n\geq \phi(0)})d\PP_1\leq \int \one_{S_\e(\phi \circ f_n)\geq \phi(0)}d\PP_1.
    \end{equation}
      Let $\gamma_i$ be a coupling between $\PP_i$ and $\PP_i^*$ for which $\esssup_{(\bx,\by)\sim \gamma_i} \|\bx-\by\|\leq \e$. Then the measure $\gamma_i$ is supported on $\Delta_\e=\{(\bx,\by)\colon \|\bx-\by\|\leq \e\}$. Furthermore, as $S_\e(\phi \circ f_n)(\bx)\geq \phi \circ f_n(\bx')$ everywhere on $\Delta_\e$, the relation $S_\e(\phi \circ f_n)(\bx)\geq \phi \circ f_n(\bx')$ actually holds $\gamma_1$-a.e. Therefore, \eqref{eq:sup_comp_slack_approx} actually implies that $S_\e(\phi \circ f_n)(\bx)-\phi\circ f_n(\bx')$ converges in $\gamma_1$-measure to 0. In particular, since $\phi(c) < \phi(0)$, $\lim_{n\to \infty} \gamma_1\big(S_\e(\phi\circ f_n)(\bx)-\phi(f_n(\bx'))\geq \phi(0)-\phi(c)\big)=0$ and thus $\lim_{n\to \infty} \gamma_1(S_\e(\phi\circ f_n)(\bx)\geq \phi(0) \cap \phi\circ f_n(\bx')<\phi(c))=0$. Therefore, 
      \begin{align*}
        &\liminf_{n\to \infty} \PP_1(S_\e(\phi\circ f_n)(\bx)\geq \phi(0))=\liminf_{n\to \infty} \gamma_1(S_\e(\phi\circ f_n)(\bx)\geq \phi(0)\cap \phi\circ f_n(\bx')\geq \phi(c))\\
        &\leq \liminf_{n\to\infty} \gamma_1(\phi\circ f_n(\bx')\geq \phi(c))=\liminf_{n\to \infty} \PP_1^*(\phi \circ f_n(\bx')\geq \phi(c))
      \end{align*}
    This calculation implies
    \begin{equation}\label{eq:PP_1_limit}
         \liminf_{n\to \infty} \int \one_{S_\e(\phi \circ f_n)(\bx)\geq \phi(0)}d\PP_1\leq \liminf_{n\to \infty} \int \one_{\phi\circ f_n(\bx')\geq \phi(c)} d\PP_1^*\leq  \liminf_{n\to \infty}\int \one_{ f_n\leq c} d\PP_1^*    
    \end{equation}

    The last inequality follows because Lemma~\ref{lemma:minimizing_seq} states that $\alpha >c$ implies $\phi(\alpha)<\phi(c)$ and therefore $\one_{\phi\circ f_n\geq \phi(c)}\leq \one_{ f_n\leq c}$. Equation \ref{eq:delta_measure_limit} then implies
    \begin{equation}\label{eq:iphi_lim}
         \liminf_{n\to \infty} \int\one_{\phi\circ f_n\geq \phi(0)} d\PP_1^*\leq \liminf_{n\to \infty} \int \one_{f_n \leq c}d\PP_1^*=\liminf_{n\to \infty} \int \one_{f_n\leq -c} d\PP_1^*.
    \end{equation}
    Recall that the sequence $f_n$ was chosen so that the limit $\lim_{n\to \infty} \int S_\e(\one_{f_n\leq 0}) d\PP_1$ exists. Combining this fact with  \eqref{eq:PP_1_pre_limit}, \eqref{eq:PP_1_limit}, and \eqref{eq:iphi_lim} results in    
    \begin{equation}\label{eq:scs_inequality_PP_1}
        \limsup_{n\to \infty}\int S_\e(\one_{f_n\leq 0})d\PP_1\leq \liminf_{n\to \infty}\int \one_{f_n\leq -c} d\PP_1^*\leq \liminf_{n\to\infty} \int \one_{f_n\leq 0} d\PP_1^*
    \end{equation}
    
     The first relation of \eqref{eq:sup_comp_slack_approx_classification} then follows from \eqref{eq:scs_inequality_PP_1} together with Lemma~\ref{lemma:classification_comp_slack_easy}.

    A similar argument implies the second relation of \eqref{eq:sup_comp_slack_approx_classification}. Because $\one_{f>0}=\one_{-f<0}\leq \one_{-f\leq 0}$, the same chain of inequalities as \eqref{eq:PP_1_pre_limit}, \eqref{eq:PP_1_limit}, and \eqref{eq:iphi_lim} implies that 
    \[\limsup_{n\to \infty}\int S_\e(\one_{f_n>0})d\PP_0\leq \limsup_{n\to \infty}\int S_\e(\one_{-f_n\leq 0})d\PP_0\leq \liminf_{n\to \infty}\int \one_{-f_n \leq -c} d\PP_0^*= \liminf_{n\to \infty} \int \one_{f_n\geq c} d\PP_0^*\]
    As $c>0$, it follows that $\limsup_{n\to \infty} \int S_\e(\one_{f_n>0})d\PP_0\leq \liminf_{n\to \infty} \int \one_{f_n> 0} d\PP_0^*$. Once again, the second expression of \eqref{eq:sup_comp_slack_approx_classification} follows from this relation and Lemma~\ref{lemma:classification_comp_slack_easy}.
\end{proof}

    \section{Quantitative Bounds for the $\rho$-Margin Loss}\label{sec:quantitative}
   
    As discussed in the introduction, statistical consistency is not the only property one would want from a surrogate. Hopefully, minimizing a surrogate will also efficiently minimize the classification loss.
    \citet{BartlettJordanMcAuliffe2006,Steinwart2007,ReidWilliamson2009} prove bounds of the form $R(f)-R_*\leq G_\phi (R_\phi^*(f)-R_{\phi,*})$ for a function $G_\phi$ and $R_*=\inf_f R(f)$, $R_{\phi,*}=\inf_f R_\phi(f)$. The function $G_\phi$ is an upper bound on the rate of convergence of the classification risk in terms of the rate of convergence of the surrogate risk. One would hope that $G_\phi$ is not logarithmic, as such a bound could imply that reducing $R(f)-R_*$ by a quantity $\Delta$ could require an exponential change of $e^\Delta$ in $R_\phi(f)-R_{\phi,*}$. \citet{BartlettJordanMcAuliffe2006} compute such $G_\phi$ for several popular losses in the standard classification setting. For example, they show the bounds $G_\phi(\theta)=\theta$ for the hinge loss $\phi(\alpha)=(1-\alpha)_+$ and $G_\phi(\theta)= \sqrt{\theta}$ for the squared hinge loss $\phi(\alpha)=(1-\alpha)_+^2$. On can prove an analogous bound for the $\rho$-margin loss in the adversarial setting:

        \begin{theorem}\label{th:tight_bound}
            		 Let $\phi_\rho=\min(1,\max( 1-\alpha/\rho,0))$ be the $\rho$-margin loss, $R^\e_*=\inf_f R^\e(f)$, and $R^\e_{\phi_\rho,*}(f)=\inf_f R_{\phi_\rho}^\e(f)$. Then 
               \[R^\e(f)-R^\e_*\leq R^\e_{\phi_\rho}(f)-R^\e_{\phi_\rho,*}.\]
        \end{theorem}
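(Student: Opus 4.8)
The plan is to reduce the theorem to two elementary observations: (i) the $\rho$-margin loss pointwise dominates the $0$-$1$ loss, so that $R^\e(f)\le R^\e_{\phi_\rho}(f)$ for \emph{every} Borel $f$; and (ii) the two optimal values actually coincide, $R^\e_*=R^\e_{\phi_\rho,*}$. Given these, the bound is immediate: $R^\e(f)-R^\e_*=R^\e(f)-R^\e_{\phi_\rho,*}\le R^\e_{\phi_\rho}(f)-R^\e_{\phi_\rho,*}$.

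For (i), I would first check the pointwise inequalities $\one_{\alpha\le 0}\le\phi_\rho(\alpha)$ and $\one_{\alpha>0}\le\phi_\rho(-\alpha)$ for all $\alpha\in\Rset$; both are trivial from the formula $\phi_\rho(\alpha)=\min(1,\max(1-\alpha/\rho,0))$, since $\phi_\rho\equiv 1$ on $(-\infty,0]$ and $\phi_\rho\ge 0$ everywhere. Hence $\phi_\rho\circ f\ge\one_{f\le 0}$ and $\phi_\rho\circ(-f)\ge\one_{f>0}$ pointwise on $\Rset^d$. Since $g\mapsto S_\e(g)$ is monotone (directly from its definition as a pointwise supremum), $S_\e(\phi_\rho\circ f)\ge S_\e(\one_{f\le 0})$ and $S_\e(\phi_\rho\circ-f)\ge S_\e(\one_{f>0})$ pointwise; integrating the first against $\PP_1$ and the second against $\PP_0$ and comparing \eqref{eq:adv_zero_one_loss_sup} with \eqref{eq:adv_phi_loss} yields $R^\e_{\phi_\rho}(f)\ge R^\e(f)$. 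Taking the infimum over $f$ already gives one half of (ii), namely $R^\e_{\phi_\rho,*}\ge R^\e_*$.

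The heart of the argument is the other half of (ii), which I would obtain from the identity $C_{\phi_\rho}^*(\eta)=C^*(\eta)=\min(\eta,1-\eta)$ for all $\eta\in[0,1]$. The inequality $C^*(\eta)\le C_{\phi_\rho}^*(\eta)$ follows by taking the infimum over $\alpha$ in the pointwise bound $C(\eta,\alpha)\le C_{\phi_\rho}(\eta,\alpha)$ established in (i); for the reverse, evaluating $C_{\phi_\rho}(\eta,\cdot)$ at $\alpha=\rho$ gives $(1-\eta)\phi_\rho(-\rho)=1-\eta$ and at $\alpha=-\rho$ gives $\eta$, so $C_{\phi_\rho}^*(\eta)\le\min(\eta,1-\eta)=C^*(\eta)$. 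Consequently the dual objectives of \eqref{eq:cdl_def} and \eqref{eq:dl_def} coincide: $\bar R_{\phi_\rho}(\PP_0',\PP_1')=\bar R(\PP_0',\PP_1')$ for all finite Borel measures $\PP_0',\PP_1'$ of equal mass. Applying the two minimax theorems then finishes the proof: by Theorem~\ref{th:minimax}, $R^\e_{\phi_\rho,*}=\sup_{\PP_0'\in\Wball\e(\PP_0),\,\PP_1'\in\Wball\e(\PP_1)}\bar R_{\phi_\rho}(\PP_0',\PP_1')$, which equals $\sup_{\PP_0',\PP_1'}\bar R(\PP_0',\PP_1')=R^\e_*$ by Theorem~\ref{th:minimax_classification}.

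I do not expect a serious obstacle: the entire content is the identity $C_{\phi_\rho}^*=C^*$, i.e., the fact that the $\rho$-margin surrogate and the $0$-$1$ loss have \emph{literally the same dual problem}, which is exactly why the comparison holds with constant $1$ and no nontrivial calibration function. The only point needing a word of care is the measurability convention behind $\int S_\e(\cdot)$, but since $S_\e$ preserves pointwise inequalities and the associated (outer) integral is monotone, the comparison $R^\e_{\phi_\rho}(f)\ge R^\e(f)$ goes through verbatim.
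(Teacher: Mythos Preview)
Your proposal is correct and follows essentially the same approach as the paper: establish $R^\e(f)\le R^\e_{\phi_\rho}(f)$ from the pointwise domination $\phi_\rho(\alpha)\ge\one_{\alpha\le 0}$, $\phi_\rho(-\alpha)\ge\one_{\alpha>0}$, and obtain $R^\e_*=R^\e_{\phi_\rho,*}$ from the identity $C_{\phi_\rho}^*=C^*$. The only difference is cosmetic: the paper compresses the second step into a single ``therefore,'' whereas you spell out that $C_{\phi_\rho}^*=C^*$ forces $\bar R_{\phi_\rho}=\bar R$ and then invoke Theorems~\ref{th:minimax_classification} and~\ref{th:minimax} to equate the primal infima via their common dual.
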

    Notice that this theorem immediately implies that the $\rho$-margin loss is in fact adversarially consistent. The proof below is completely independent of the argument in Section~\ref{sec:adversarially_consistent_losses}.
    \begin{proof}
        Notice that for the $\rho$-margin loss, $C_{\phi_\rho}^*=C^*$ and therefore, 
        the optimal $\phi_\rho$-risk $R^\e_{{\phi_\rho},*}$ 
        equals the optimal adversarial classification risk $R^\e_*$. However, since $\phi_{\rho}(\alpha)\geq \one_{\alpha \leq 0}$ and $\phi_\rho(-\alpha)\geq \one_{\alpha>0}$ for any $\alpha$, one can conclude that $R^\e(f)\leq R^\e_{\phi_\rho}(f)$. Therefore,
        \[R^\e(f)-R^\e_*= R^\e(f)-R^\e_{{\phi_\rho},*}\leq R^\e_{\phi_\rho}(f)-R^\e_{\phi_\rho,*}\]
    \end{proof}
    
    This bound implies that reducing the excess adversarial $\rho$-margin loss by $\Delta$ also reduces an upper bound on the excess adversarial classification loss by $\Delta$. Thus, one would expect that minimizing the adversarial $\rho$-margin risk would be an effective procedure for minimizing the adversarial classification risk.

    Extending Theorem~\ref{th:tight_bound} to other losses remains an open problem. In the non-adversarial scenario, many prior works develop techniques for computing such bounds. These include the $\Psi$-transform of \citep{BartlettJordanMcAuliffe2006}, calibration analysis in \citep{Steinwart2007}, and special techniques for proper losses in \citep{ReidWilliamson2009}. 
    
    Contemporary work \citep{MaoMohriZhong2023crossentropy} derives an $\cH$-consistency surrogate risk bound for a variant of the adversarial $\rho$-margin loss.

    \section{Conclusion}
        In conclusion, we proved that the adversarial training procedure is consistent for perturbations in an $\e$-ball if an only if $C_\phi^*(1/2)<\phi(0)$. The technique that proved consistency extends to perturbation sets which satisfy existence and minimax theorems analogous to Theorems~\ref{th:minimax_classification} and~\ref{th:minimax}. Furthermore, we showed a quantitative excess risk bound for the adversarial $\rho$-margin loss. Finding such bounds for other losses remains an open problem. We hope that insights to consistency and the structure of
adversarial learning will lead to the design of better adversarial learning algorithms.

\begin{ack}
Natalie Frank was supported in part by the Research Training Group in Modeling and Simulation funded by the National Science Foundation via grant RTG/DMS – 1646339. Jonathan Niles-Weed was supported in part by a Sloan Research Fellowship.
\end{ack}
\bibliographystyle{abbrvnat}
\bibliography{bibliography,bib3}

\vfill
\pagebreak
\appendix
\section{An Alternative Characterization of Consistency-- Proof of Proposition~\ref{prop:alt_consistency_characterization}}\label{app:alt_consistency_characterization}

First, prior work computes the minimum standard $\phi$-risk.
\begin{lemma}
\label{lemma:R_phi_min}
    Let $\phi$ be any monotonic loss function. Then 
    \[\inf_{f\text{ measurable}}R_\phi(f)=\int C_\phi^*(\eta)d\PP\]
\end{lemma}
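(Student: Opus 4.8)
The plan is to prove the two inequalities separately, the first being immediate and the second reducing to a measurable selection. For the bound $\inf_{f}R_\phi(f)\ge \int C_\phi^*(\eta)\,d\PP$: for every measurable $f$ and every $\bx$ the definition of $C_\phi^*$ gives $C_\phi(\eta(\bx),f(\bx))\ge C_\phi^*(\eta(\bx))$, and since $R_\phi(f)=\int C_\phi(\eta,f)\,d\PP$ by \eqref{eq:pw_loss}, integrating this pointwise inequality yields the claim. Here I would note that $\bx\mapsto C_\phi(\eta(\bx),f(\bx))$ is Borel because $C_\phi(\eta,\alpha)=\eta\phi(\alpha)+(1-\eta)\phi(-\alpha)$ is jointly Borel on $[0,1]\times\ov\Rset$ ($\phi$ monotone $\Rightarrow$ $\phi$ Borel), and that $C_\phi^*(\eta)\le C_\phi(\eta,0)=\phi(0)<\infty$, so all integrals are finite.

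For the reverse inequality it suffices to produce, for each $\e>0$, a measurable $f_\e$ with $R_\phi(f_\e)\le \int C_\phi^*(\eta)\,d\PP+\e$. By \eqref{eq:pw_loss} this reduces to finding a Borel map $a_\e\colon[0,1]\to\ov\Rset$ with $C_\phi(\eta,a_\e(\eta))\le C_\phi^*(\eta)+\e$ for every $\eta$, and then setting $f_\e=a_\e\circ\eta$: one gets $R_\phi(f_\e)=\int C_\phi(\eta,f_\e)\,d\PP\le \int C_\phi^*(\eta)\,d\PP+\e\,\PP(\Rset^d)=\int C_\phi^*(\eta)\,d\PP+\e$ since $\PP$ is a probability measure, and then $\e\downarrow 0$. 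If only $\Rset$-valued functions are allowed, I would truncate $a_\e$ to a large compact interval, absorbing a vanishing error using the one-sided limits $\lim_{\alpha\to\pm\infty}\phi(\alpha)$.

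The construction of $a_\e$ is the only real content. The clean route (when $\phi$ is continuous, as is the running assumption) uses the identity $C_\phi^*(\eta)=\inf_{q\in\QQ}C_\phi(\eta,q)$: given $\delta$, pick $\alpha$ with $C_\phi(\eta,\alpha)<C_\phi^*(\eta)+\delta/2$, then a rational $q$ close to $\alpha$ so that continuity of $C_\phi(\eta,\cdot)$ gives $C_\phi(\eta,q)<C_\phi^*(\eta)+\delta$ (if the infimum is approached only as $\alpha\to\pm\infty$, large rationals work directly). Enumerating $\QQ=\{q_1,q_2,\dots\}$, the sets $A_k=\{\eta:C_\phi(\eta,q_k)<C_\phi^*(\eta)+\e\}\setminus\bigcup_{j<k}A_j$ are Borel, disjoint, and cover $[0,1]$ — Borel because $\eta\mapsto C_\phi(\eta,q)$ is affine and $\eta\mapsto C_\phi^*(\eta)$ is a countable infimum of affine functions — so $a_\e:=\sum_k q_k\one_{A_k}$ works. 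To handle an arbitrary monotone (not necessarily continuous) $\phi$, I would instead invoke Topkis's monotone selection theorem: $C_\phi(\eta',\alpha)-C_\phi(\eta,\alpha)=(\eta'-\eta)(\phi(\alpha)-\phi(-\alpha))$ is non-increasing in $\alpha$ for $\eta'>\eta$, so $C_\phi$ has decreasing differences, and the minimizing correspondence $\eta\mapsto\argmin_{\alpha\in\ov\Rset}C_\phi(\eta,\alpha)$ (nonempty by compactness of $\ov\Rset$ after extending $\phi$ by its limits) admits a non-decreasing, hence Borel, selection that attains the infimum exactly. This recovers the classical computation of the Bayes surrogate risk (cf.\ \citep{BartlettJordanMcAuliffe2006}).

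The main obstacle is precisely this selection step: the pointwise minimizer can fail to exist in $\Rset$ (e.g.\ $\eta\in\{0,1\}$ for the exponential loss), so the argument must either live on $\ov\Rset$ or pay a truncation error, and one must keep track that $\eta\mapsto C_\phi^*(\eta)$ and the selection are Borel so that $f_\e=a_\e\circ\eta$ is a legitimate competitor. Everything else is routine bookkeeping with the pointwise representation \eqref{eq:pw_loss}.
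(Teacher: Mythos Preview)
Your proposal is correct. The paper does not actually supply a proof of this lemma: it simply cites \citep{BartlettJordanMcAuliffe2006} and remarks that the statement is the $\e=0$ case of Theorem~\ref{th:minimax}. Your route---the pointwise lower bound from \eqref{eq:pw_loss} together with an explicit $\e$-optimal Borel selection built from a countable enumeration---is the standard self-contained argument that underlies the cited reference, so the two are compatible rather than genuinely different; you have just written out what the paper outsources.

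One small caveat on your fallback for discontinuous $\phi$: the claim that $\argmin_{\alpha\in\ov\Rset}C_\phi(\eta,\alpha)$ is nonempty ``by compactness of $\ov\Rset$ after extending $\phi$ by its limits'' tacitly needs lower semicontinuity of $C_\phi(\eta,\cdot)$, which can fail for a merely monotone $\phi$. For example, take $\phi(\alpha)=2-\alpha$ on $(-\infty,1)$, $\phi(1)=1/2$, and $\phi\equiv 0$ on $(1,\infty)$; then $C_\phi^*(1/2)=3/2$ while $C_\phi(1/2,\alpha)>3/2$ for every $\alpha\in\ov\Rset$, so the minimizing correspondence is empty and Topkis does not apply as stated. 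This does not affect the continuous case (your rational-enumeration argument is airtight there), and under Assumption~\ref{as:phi}---the only case the paper actually needs---your proof is complete. If you want the fully general monotone statement, either assume one-sided continuity of $\phi$ (right-continuity makes $C_\phi(\eta,\cdot)$ lsc) or run your enumeration over the rationals together with the at most countably many jump points of $\phi$.
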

This result appears on page 4 of \citep{BartlettJordanMcAuliffe2006}. Notice that Lemma~\ref{lemma:R_phi_min} is Theorem~\ref{th:minimax} with $\e=0$.   
Next, one can use the following lemma to compare minimizing sequences of $C_\phi(\eta,\cdot)$ and $C(\eta,\cdot)$.

\begin{lemma}\label{lemma:phi_zero_value}
        Assume that Assumption~\ref{as:phi} holds, $\phi$ is consistent, and $0\in \argmin C_\phi(\eta,\cdot)$. Then $\eta=1/2$. 
    \end{lemma}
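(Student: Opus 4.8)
The plan is to contradict the \emph{definition} of consistency — which keeps the argument non-circular, since this lemma is itself an ingredient in the proof of Proposition~\ref{prop:alt_consistency_characterization} — by producing a distribution at which the constant function $f\equiv 0$ minimizes $R_\phi$ but not $R$. Given $\eta$ with $0\in\argmin C_\phi(\eta,\cdot)$, I would fix any $\bx_0\in\Rset^d$ and choose $\cD$ so that $\PP_1=\eta\,\delta_{\bx_0}$ and $\PP_0=(1-\eta)\,\delta_{\bx_0}$, so that $\PP=\delta_{\bx_0}$, $\eta(\bx_0)=\eta$, and $R_\phi(f)=C_\phi(\eta,f(\bx_0))$, $R(f)=C(\eta,f(\bx_0))$ for every measurable $f$; by Lemma~\ref{lemma:R_phi_min}, $\inf_f R_\phi(f)=C_\phi^*(\eta)$. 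The hypothesis $0\in\argmin C_\phi(\eta,\cdot)$ gives $C_\phi^*(\eta)=C_\phi(\eta,0)=\phi(0)$, so the constant function $f\equiv 0$ attains $\inf_f R_\phi(f)$ and hence the constant sequence $f_n\equiv 0$ is a minimizing sequence for $R_\phi$.

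Consistency then forces $f_n\equiv 0$ to be a minimizing sequence for $R$, i.e.\ $C(\eta,0)=C^*(\eta)$; since $C(\eta,0)=\eta$ and $C^*(\eta)=\min(\eta,1-\eta)$, this already yields $\eta\le 1/2$. For the reverse inequality I would invoke the reflection identity $C_\phi(1-\eta,\alpha)=C_\phi(\eta,-\alpha)$: the hypothesis says $C_\phi(\eta,\beta)\ge\phi(0)$ for all $\beta\in\ov\Rset$, whence $C_\phi(1-\eta,\alpha)=C_\phi(\eta,-\alpha)\ge\phi(0)=C_\phi(1-\eta,0)$, so $0\in\argmin C_\phi(1-\eta,\cdot)$ as well. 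Repeating the first paragraph with $1-\eta$ in place of $\eta$ gives $1-\eta\le 1/2$, and the two bounds combine to $\eta=1/2$.

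The step carrying the real content — and where a naive attempt breaks down — is precisely this symmetry reduction. Using the constant-zero sequence for the given $\eta$ produces a contradiction only when $\eta>1/2$, where $\alpha=0$ lies on the Bayes-suboptimal side so that $C(\eta,0)=\eta>1-\eta=C^*(\eta)$; when $\eta<1/2$ the sequence $f_n\equiv 0$ is a perfectly valid minimizing sequence for $R$ (since $\alpha=0$ already attains $C^*(\eta)=\eta$), and no contradiction surfaces until one passes from $\eta$ to $1-\eta$. Everything else is routine: the pointwise representation $R_\phi(f)=C_\phi(\eta,f(\bx_0))$, $R(f)=C(\eta,f(\bx_0))$ for a point mass, and the elementary values $C(\eta,0)=\eta$, $C^*(\eta)=\min(\eta,1-\eta)$, $C_\phi(\eta,0)=\phi(0)$.
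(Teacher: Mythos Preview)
Your proof is correct and follows essentially the same route as the paper's: pass to a distribution with constant conditional probability $\eta$, use consistency to force $0$ to minimize $C(\eta,\cdot)$ (hence $\eta\le 1/2$), then apply the reflection $C_\phi(1-\eta,\alpha)=C_\phi(\eta,-\alpha)$ to obtain $1-\eta\le 1/2$. The only cosmetic difference is that the paper cites Proposition~\ref{prop:alt_consistency_characterization} for the step ``$0$ minimizes $C_\phi(\eta,\cdot)\Rightarrow 0$ minimizes $C(\eta,\cdot)$'', whereas you invoke the definition of consistency directly via the constant sequence $f_n\equiv 0$; your care about circularity is well placed, though the paper's citation is also non-circular since only the implication \ref{it:consistent}$\Rightarrow$\ref{it:R_min_background} is needed and that direction does not rely on this lemma.
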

    \begin{proof}
        Consider a distribution for which $\eta(\bx)\equiv \eta$ is constant. Then $R_\phi(f)=C_\phi(\eta,f)$ and $R(f)=C(\eta,f)$. The consistency of $\phi$ implies that if $0$ minimizes $C_\phi(\eta,\cdot)$, then it also must minimize $C(\eta,\cdot)$ and therefore $\eta\leq 1/2$. 
        
        However, notice that $C_\phi(\eta,\alpha)=C_\phi(1-\eta,-\alpha)$. Thus if 0 minimizes $C_\phi(\eta,\cdot)$ it must also minimize $C_\phi(1-\eta,\cdot)$.
        The consistency of $\phi$ then implies that $1-\eta\leq 1/2$ as well and consequently, $\eta=1/2$.
    \end{proof}

    We use this result to prove Proposition~\ref{prop:alt_consistency_characterization} together with a standard argument from analysis:
        \begin{lemma}\label{lemma:subsequence_equivalence}
	Let $\{a_n\}$ be a sequence in $\Rset\cup \{\infty\}$. Then the following are equivalent:
	\begin{enumerate}[label=\arabic*)]
		\item \label{it:subsequence_equivalence_first}$\lim_{n\to \infty} a_n=a$
		\item \label{it:subsequence_equivalence_second}Every subsequence $\{a_{n_j}\}$ of $\{a_n\}$ has a subsequence $\{a_{j_k}\}$ for which $\lim_{k\to\infty} a_{j_k}=a$
	\end{enumerate}
\end{lemma}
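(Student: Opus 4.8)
This is a standard fact from analysis, and the plan is the usual one: prove \ref{it:subsequence_equivalence_first} $\Rightarrow$ \ref{it:subsequence_equivalence_second} directly and \ref{it:subsequence_equivalence_second} $\Rightarrow$ \ref{it:subsequence_equivalence_first} by contraposition. For \ref{it:subsequence_equivalence_first} $\Rightarrow$ \ref{it:subsequence_equivalence_second}: if $\lim_{n\to\infty} a_n = a$, then every subsequence $\{a_{n_j}\}$ also converges to $a$, so the required further subsequence $\{a_{j_k}\}$ may be taken to be $\{a_{n_j}\}$ itself. This direction is immediate from the definition of convergence.

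For \ref{it:subsequence_equivalence_second} $\Rightarrow$ \ref{it:subsequence_equivalence_first}, I would argue the contrapositive. Suppose $a_n$ does not converge to $a$. It is convenient first to fix a metric realizing the intended convergence on $\Rset \cup \{\infty\}$: the map $t \mapsto \arctan t$, with the convention $\arctan \infty := \pi/2$, is a homeomorphism of $\Rset \cup \{\infty\}$ onto $(-\pi/2, \pi/2]$, so a sequence converges in $\Rset \cup \{\infty\}$ (where $a_n \to \infty$ means $a_n$ eventually exceeds every threshold or equals $\infty$) if and only if it converges with respect to $\rho(s,t) := |\arctan s - \arctan t|$. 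Since $a_n \not\to a$, there is a $\delta > 0$ with $\rho(a_n, a) \geq \delta$ for infinitely many $n$; collecting those indices yields a subsequence $\{a_{n_j}\}$ with $\rho(a_{n_j}, a) \geq \delta$ for all $j$. Then every further subsequence of $\{a_{n_j}\}$ also stays at $\rho$-distance at least $\delta$ from $a$, and hence cannot converge to $a$, contradicting \ref{it:subsequence_equivalence_second}.

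There is no genuine obstacle in this argument; the only point deserving a word of care is the role of the point $\infty$ — one must pin down a topology (equivalently, the metric $\rho$ above) on $\Rset \cup \{\infty\}$ compatible with the notion of convergence used elsewhere in the paper. Once that is fixed, the two facts invoked above — that a subsequence of a convergent sequence has the same limit, and that a sequence failing to converge to $a$ admits a subsequence bounded away from $a$ — are entirely routine, so the proof requires no further input.
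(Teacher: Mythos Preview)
Your argument is correct and is exactly the standard one: the forward direction is immediate, and the contrapositive of the reverse direction extracts a subsequence bounded away from $a$, which can have no further subsequence converging to $a$; your use of the $\arctan$ metric is a clean way to handle the point $\infty$ uniformly. The paper does not actually supply a proof of this lemma---it is simply stated as ``a standard argument from analysis''---so there is nothing to compare against, and your write-up fills that gap appropriately.
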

    As a result:
    \begin{corollary}\label{cor:subseq_const}
        If every minimizing sequence $f_n$ of $R_\phi$ has a subsequence $f_{n_j}$ that minimizes $R$, then $\phi$ is consistent.
    \end{corollary}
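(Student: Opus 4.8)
The plan is to obtain the corollary as an immediate consequence of the subsequence criterion in Lemma~\ref{lemma:subsequence_equivalence}, applied to the scalar sequence $a_n = R(f_n)$ with target value $R_* := \inf_f R(f)$. Since the classification risk is bounded (it takes values in $[0,1]$), $R_*$ is a finite real number, so Lemma~\ref{lemma:subsequence_equivalence} is available. Fix an arbitrary minimizing sequence $f_n$ of $R_\phi$; to show that $\phi$ is consistent it suffices to prove $R(f_n) \to R_*$, and by Lemma~\ref{lemma:subsequence_equivalence} this in turn follows once we check that every subsequence of $\{R(f_n)\}$ has a further subsequence converging to $R_*$.

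First I would fix an arbitrary subsequence $f_{n_j}$ and observe that it is again a minimizing sequence of $R_\phi$: the convergence $R_\phi(f_n)\to\inf_f R_\phi(f)$ is inherited by every subsequence. Then I would apply the hypothesis of the corollary to $f_{n_j}$, obtaining a further subsequence $f_{n_{j_k}}$ that minimizes $R$, i.e.\ $R(f_{n_{j_k}}) \to R_*$. This is exactly the further subsequence demanded by condition~\ref{it:subsequence_equivalence_second} of Lemma~\ref{lemma:subsequence_equivalence}.

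With this verification in hand, Lemma~\ref{lemma:subsequence_equivalence} yields $R(f_n)\to R_*$, so $f_n$ is a minimizing sequence of $R$; since $f_n$ was an arbitrary minimizing sequence of $R_\phi$, the loss $\phi$ is consistent. There is no real obstacle in this argument — the only points worth stating carefully are that passing to a subsequence preserves the property of being a minimizing sequence of $R_\phi$, and that ``$f_{n_j}$ minimizes $R$'' should be read as ``$R(f_{n_j}) \to R_*$'', so that it matches the hypothesis of the subsequence lemma.
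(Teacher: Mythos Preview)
Your proposal is correct and follows exactly the approach the paper intends: the paper states the corollary immediately after Lemma~\ref{lemma:subsequence_equivalence} with the phrase ``As a result,'' leaving the details implicit, and you have simply written out those details --- apply the lemma to $a_n=R(f_n)$, use that subsequences of minimizing sequences of $R_\phi$ are again minimizing, and invoke the hypothesis to extract the further subsequence. There is nothing to add.
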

Furthermore, this corollary can be applied to a distribution with constant $\eta(\bx)$ to conclude:
\begin{corollary}\label{cor:subseq_const_C}
    If every minimizing sequence $\alpha_n$ for $C_\phi(\eta,\cdot)$ has a subsequence $\alpha_{n_j}$ that minimizes $C(\eta,\cdot)$ then one can conclude that every minimizing sequence is of $C_\phi(\eta,\cdot)$ is also a minimizing sequence of $C(\eta,\cdot)$.
\end{corollary}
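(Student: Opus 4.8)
The corollary is the pointwise counterpart of Corollary~\ref{cor:subseq_const}, and I would prove it by the same device: the ``every subsequence has a further convergent subsequence'' principle recorded in Lemma~\ref{lemma:subsequence_equivalence}. Concretely, fix $\eta$ and let $\alpha_n$ be an arbitrary minimizing sequence of $C_\phi(\eta,\cdot)$, i.e.\ $C_\phi(\eta,\alpha_n)\to C_\phi^*(\eta)$; the goal is to show $C(\eta,\alpha_n)\to C^*(\eta)$, which is exactly the statement that $\alpha_n$ minimizes $C(\eta,\cdot)$. I would apply Lemma~\ref{lemma:subsequence_equivalence} to the sequence $a_n=C(\eta,\alpha_n)$, which lies in $[0,1]\subset\Rset\cup\{\infty\}$, with candidate limit $a=C^*(\eta)$.

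The two things to verify are routine. First, any subsequence $\alpha_{n_j}$ of $\alpha_n$ is again a minimizing sequence of $C_\phi(\eta,\cdot)$, since $C_\phi(\eta,\alpha_{n_j})\to C_\phi^*(\eta)$ is inherited from the full sequence. Second, by the hypothesis of the corollary this subsequence has a further subsequence $\alpha_{n_{j_k}}$ that minimizes $C(\eta,\cdot)$, i.e.\ $C(\eta,\alpha_{n_{j_k}})\to C^*(\eta)$. Hence every subsequence of $a_n$ has a further subsequence converging to $C^*(\eta)$, and Lemma~\ref{lemma:subsequence_equivalence} gives $a_n\to C^*(\eta)$, as desired. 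As the surrounding text suggests, one can reach the same conclusion by instead invoking Corollary~\ref{cor:subseq_const} for a distribution with constant conditional label probability $\eta$ --- for instance a unit point mass $\cD$ with $\PP_1=\eta\,\delta_{\zero}$ and $\PP_0=(1-\eta)\,\delta_{\zero}$ --- since then $R_\phi(f)=C_\phi(\eta,f(\zero))$ and $R(f)=C(\eta,f(\zero))$, so minimizing sequences of $R_\phi$ and $R$ are exactly (the evaluations at $\zero$ of) minimizing sequences of $C_\phi(\eta,\cdot)$ and $C(\eta,\cdot)$, and the hypothesis of Corollary~\ref{cor:subseq_const} for this $\cD$ is precisely the hypothesis here.

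I do not expect a genuine obstacle: the content is a soft fact about sequences, and the only care needed is bookkeeping. One must keep in mind that ``$\alpha_n$ minimizes $C(\eta,\cdot)$'' is the \emph{limit} statement $C(\eta,\alpha_n)\to C^*(\eta)=\inf_\alpha C(\eta,\alpha)$ (so that Lemma~\ref{lemma:subsequence_equivalence} is applicable), and, if one takes the point-mass route, that $\inf_f R_\phi(f)$ and $\inf_f R(f)$ against a point mass really equal $C_\phi^*(\eta)$ and $C^*(\eta)$ --- which is immediate from the definitions of $C_\phi^*$ and $C^*$ (and is the $\e=0$ instance of Lemma~\ref{lemma:R_phi_min}) --- so that ``minimizing sequence'' means the same thing on both sides of the correspondence.
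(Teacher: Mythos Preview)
Your proposal is correct and matches the paper's approach. The paper derives the corollary by applying Corollary~\ref{cor:subseq_const} to a distribution with constant $\eta(\bx)\equiv\eta$, which is exactly your second route; your first route via Lemma~\ref{lemma:subsequence_equivalence} is the same argument unpacked one level and is equally valid.
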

    
   We now prove a result slightly stronger than Proposition~\ref{prop:alt_consistency_characterization}.
   \begin{theorem}
        The following are equivalent:
        \begin{enumerate}[label=\arabic*)]
            \item\label{it:R_sequence}
                For all distributions, $f_n$ is a minimizing sequence of $R_\phi$ implies that $f_n$ is a minimizing sequence of $R$.
            \item\label{it:C_sequence} For all $\eta\in [0,1]$, $\alpha_n$ is a minimizing sequence of $C_\phi(\eta,\cdot)$ implies that $\alpha_n$ is a minimizing sequence of $C(\eta,\cdot)$.
            \item\label{it:C_minimizers} Every minimizer of $C_\phi(\eta,\cdot)$ is also a minimizer of $C(\eta,\cdot)$.
            \item \label{it:R_minimizers} Every minimizer of $R_\phi$ is a minimizer of $R$
        \end{enumerate}
   \end{theorem}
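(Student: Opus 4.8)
The plan is to prove all four equivalences by repeatedly translating between the functional statements \ref{it:R_sequence}, \ref{it:R_minimizers} and the pointwise statements \ref{it:C_sequence}, \ref{it:C_minimizers} using distributions for which $\eta(\bx)\equiv\eta$ is constant. For such a distribution, $R_\phi(f)=\int C_\phi(\eta,f(\bx))\,d\PP$ with $C_\phi(\eta,f(\bx))\geq C_\phi^*(\eta)$ pointwise, so Lemma~\ref{lemma:R_phi_min} shows that $f$ minimizes $R_\phi$ if and only if $f(\bx)\in\argmin C_\phi(\eta,\cdot)$ for $\PP$-a.e.\ $\bx$, and the same holds with $(C_\phi,R_\phi)$ replaced by $(C,R)$. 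This dictionary immediately gives $\ref{it:C_minimizers}\Leftrightarrow\ref{it:R_minimizers}$ (pass to a constant-$\eta$ distribution, whose $R_\phi$- and $R$-minimizers are exactly the constant functions valued in $\argmin C_\phi(\eta,\cdot)$, resp.\ $\argmin C(\eta,\cdot)$) and $\ref{it:R_sequence}\Rightarrow\ref{it:C_sequence}$ (a minimizing sequence $\alpha_n$ of $C_\phi(\eta,\cdot)$ gives, as constant functions, a minimizing sequence of $R_\phi$, hence of $R$). For $\ref{it:C_sequence}\Rightarrow\ref{it:C_minimizers}$ I would extend $\phi$ to $\ov\Rset$ by $\phi(+\infty)=0$ and $\phi(-\infty)=\lim_{\alpha\to-\infty}\phi(\alpha)$, note that any $\alpha^*\in\argmin C_\phi(\eta,\cdot)$ is the limit of an explicit real-valued minimizing sequence of $C_\phi(\eta,\cdot)$ — the constant sequence $\alpha_n\equiv\alpha^*$ if $\alpha^*\in\Rset$, and $\alpha_n=\pm n$ if $\alpha^*=\pm\infty$ — and then read off from \ref{it:C_sequence} that $C(\eta,\alpha_n)\to C^*(\eta)$, which forces $\alpha^*\in\argmin C(\eta,\cdot)$.

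For $\ref{it:C_sequence}\Rightarrow\ref{it:R_sequence}$ I would argue as in Corollary~\ref{cor:subseq_const}: given a minimizing sequence $f_n$ of $R_\phi$, Lemma~\ref{lemma:R_phi_min} together with $C_\phi(\eta,f_n)-C_\phi^*(\eta)\geq 0$ shows that $C_\phi(\eta(\bx),f_n(\bx))\to C_\phi^*(\eta(\bx))$ in $L^1(\PP)$; passing to a subsequence we may assume $\PP$-a.e.\ convergence, so $f_n(\bx)$ is a minimizing sequence of $C_\phi(\eta(\bx),\cdot)$ for a.e.\ $\bx$, hence of $C(\eta(\bx),\cdot)$ by \ref{it:C_sequence}, and bounded convergence gives $R(f_{n_k})\to\int C^*(\eta)\,d\PP=\inf_f R(f)$. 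Since this applies to every subsequence of $f_n$, Lemma~\ref{lemma:subsequence_equivalence} shows $f_n$ itself minimizes $R$. At this point $\ref{it:R_sequence}\Leftrightarrow\ref{it:C_sequence}$ and $\ref{it:C_minimizers}\Leftrightarrow\ref{it:R_minimizers}$, so only $\ref{it:C_sequence}\Leftrightarrow\ref{it:C_minimizers}$ remains, and its forward direction was handled above.

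The heart of the argument is $\ref{it:C_minimizers}\Rightarrow\ref{it:C_sequence}$, and the only real obstacle is the discontinuity of $C(\eta,\cdot)$ at the origin. With $\phi$ extended as above, $C_\phi(\eta,\cdot)$ is continuous on the compact space $\ov\Rset$ and, since $C_\phi^*(\eta)\leq C_\phi(\eta,0)=\phi(0)<\infty$, attains its minimum. Given a minimizing sequence $\alpha_n$ of $C_\phi(\eta,\cdot)$, Corollary~\ref{cor:subseq_const_C} reduces the claim to showing that every subsequence has a further subsequence which is a minimizing sequence of $C(\eta,\cdot)$; I would extract $\alpha_{n_k}\to\alpha^*\in\ov\Rset$, note $\alpha^*\in\argmin C_\phi(\eta,\cdot)$ by continuity, and hence $\alpha^*\in\argmin C(\eta,\cdot)$ by \ref{it:C_minimizers}. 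If $\eta=1/2$ then $C(1/2,\cdot)\equiv 1/2$ and $\alpha_{n_k}$ is automatically minimizing. Otherwise the crux is to rule out $\alpha^*=0$: if $0\in\argmin C_\phi(\eta,\cdot)$ then $\phi(0)=C_\phi^*(\eta)=C_\phi^*(1-\eta)=C_\phi(1-\eta,0)$, so $0$ also minimizes $C_\phi(1-\eta,\cdot)$, and then \ref{it:C_minimizers} puts $0$ in both $\argmin C(\eta,\cdot)$ and $\argmin C(1-\eta,\cdot)$, forcing $\eta\leq 1/2$ and $1-\eta\leq 1/2$, i.e.\ $\eta=1/2$ — this is exactly the reasoning of Lemma~\ref{lemma:phi_zero_value}, run directly from \ref{it:C_minimizers}. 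With $\alpha^*\neq 0$, the tail of $\alpha_{n_k}$ lies strictly on one side of $0$, so $C(\eta,\alpha_{n_k})$ is eventually constant and equal to $C^*(\eta)$, which completes the proof. The technical care goes into the compactness/continuity extension to $\ov\Rset$ and the $L^1$-to-almost-everywhere passage; the conceptual point is the origin-avoidance just described.
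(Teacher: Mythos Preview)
Your proposal is correct and follows the paper's approach closely: the same constant-$\eta$ reductions, the same $L^1$-to-a.e.\ subsequence extraction for $\ref{it:C_sequence}\Rightarrow\ref{it:R_sequence}$, and the same compactness-plus-origin-avoidance for $\ref{it:C_minimizers}\Rightarrow\ref{it:C_sequence}$ (your running the Lemma~\ref{lemma:phi_zero_value} argument directly from \ref{it:C_minimizers} rather than from ``consistency'' is, if anything, cleaner and avoids an apparent circularity). One small gap: your parenthetical for $\ref{it:C_minimizers}\Leftrightarrow\ref{it:R_minimizers}$ only justifies $\ref{it:R_minimizers}\Rightarrow\ref{it:C_minimizers}$; the reverse direction must hold for \emph{all} distributions, so you need the pointwise characterization ($f$ minimizes $R_\phi$ iff $f(\bx)\in\argmin C_\phi(\eta(\bx),\cdot)$ $\PP$-a.e.) for general $\eta(\bx)$, not just constant $\eta$---but this follows by the identical Lemma~\ref{lemma:R_phi_min} argument you already gave, as the paper spells out.
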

   The proof is essentially the ``pointwise" argument discussed in Section~\ref{sec:problem_setup}.
   \begin{proof}
       We show that \ref{it:R_sequence} $\Leftrightarrow$ \ref{it:C_sequence}, \ref{it:C_sequence} $\Leftrightarrow$ \ref{it:C_minimizers}, and \ref{it:C_minimizers} $\Leftrightarrow$ \ref{it:R_minimizers}.\\\\
       \textbf{Showing \ref{it:R_sequence} is equivalent to \ref{it:C_sequence}:} \\\\
       To show that \ref{it:R_sequence} implies \ref{it:C_sequence}, consider a distribution for which $\eta(\bx)\equiv \eta$ is constant. 

       For the other direction,  let $f_n$ be any minimizing sequence of $R_\phi$. Then $C_\phi(\eta,f_{n})\geq C_\phi^*(\eta)$ and Lemma~\ref{lemma:R_phi_min} implies that the sequence $C_\phi(\eta,f_{n})$ actually converges to $C_\phi^*(\eta)$ in $L^1(\PP)$. Thus one can pick a subsequence $f_{n_j}$ for which $C_\phi(\eta,f_{n_{j}})$ converges to $C_\phi^*(\eta)$ $\PP$-a.e. (See for instance Corollary~2.32 of \citep{folland}). Then \ref{it:C_sequence} implies that the function sequence $f_{n_{j}}$ minimizes $C(\eta,\cdot)$ and therefore it also minimizes $R$ by Corollary~\ref{cor:subseq_const}. 
       \\\\
       \textbf{Showing \ref{it:C_sequence} is equivalent to \ref{it:C_minimizers}:} \\\\
       To show that \ref{it:C_sequence} implies \ref{it:C_minimizers}, notice that if $\alpha$ is a minimizer of $C_\phi(\eta,\cdot)$, \ref{it:C_sequence} immediately implies that the sequence $\alpha_n\equiv \alpha$ also minimizes $C(\eta,\cdot)$.

       For the other direction, assume that every minimizer of $C_\phi(\eta,\cdot)$ is also a minimizer of $C(\eta,\cdot)$. Let $\alpha_n$ be a minimizing sequence of $C_\phi(\eta,\cdot)$. Over the extended real numbers $\ov \Rset$, $\alpha_n$ has a subsequence  $\alpha_{n_{j}}$ that converges to a limit point $a$, which must be a minimizer of $C_\phi(\eta,\cdot)$.  Now if $a\neq 0$, both $\one_{\alpha\leq 0}, \one_{\alpha>0}$ are continuous at $a$ so that one can conclude that $\alpha_{n_{j}}$ also minimizes $C(\eta,\cdot)$. If in fact $a=0$, Lemma~\ref{lemma:phi_zero_value} implies that $\eta=1/2$ and thus \emph{any} $\alpha$ minimizes $C(1/2,\cdot)$. Thus Corollary~\ref{cor:subseq_const_C} implies that $\alpha_n$ minimizes $C(\eta,\cdot)$.
       \\\\
       \textbf{Showing \ref{it:C_minimizers} is equivalent to \ref{it:R_minimizers}}\\\\
       To show that \ref{it:R_minimizers} implies \ref{it:C_minimizers}, consider a distribution for which $\eta(\bx)\equiv\eta$ is constant. 

       For the other direction, let $f^*$ be a minimizer of $R_\phi$. Then $C_\phi(\eta(\bx),f^*(\bx))\geq C_\phi^*(\eta(\bx))$ but $R_\phi(f^*)=\int C_\phi^*(\eta)d\PP$ by Lemma~\ref{lemma:R_phi_min}. Therefore $C_\phi(\eta(\bx),f^*(\bx))=C_\phi^*(\eta(\bx))$ $\PP$-a.e. Item~\ref{it:C_minimizers} then implies the result.
   \end{proof}
\section{Minimizing $R_\phi^\e$ over $\ov \Rset$-valued functions}\label{app:Rset-approximation}
In this appendix, we will show 
\begin{lemma}\label{lemma:ov_Rset_to_Rset}
    Let $R_\phi^\e$ be defined as in \eqref{eq:adv_phi_loss}. Then
    \[ \inf_{\substack{f \text{ Borel,}\\ f\text{ $\Rset$-valued}}} \prm(f)= \inf_{\substack{f \text{ Borel,}\\ f\text{ $\ov \Rset$-valued}}} \prm(f)\]
\end{lemma}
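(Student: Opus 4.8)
The plan is to prove the two inequalities separately. The bound $\inf_{\Rset\text{-valued}}\prm \ge \inf_{\ov\Rset\text{-valued}}\prm$ is immediate, since every $\Rset$-valued Borel function is also an $\ov\Rset$-valued Borel function. For the reverse, note that $f\equiv 0$ has $\prm(0)=\phi(0)\,(\PP_0+\PP_1)(\Rset^d)<\infty$, so the right-hand infimum is finite; it therefore suffices to show that every Borel $\ov\Rset$-valued $f$ with $\prm(f)<\infty$ is approximated, in the sense $\prm(f_M)\to\prm(f)$, by $\Rset$-valued Borel functions $f_M$. I will take $f_M=\psi_M\circ f$, where $\psi_M(t)=\min(M,\max(-M,t))$ clamps $\ov\Rset$ onto $[-M,M]$; each $f_M$ is Borel and $\Rset$-valued, hence admissible for the left-hand infimum.

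The argument rests on one observation. Extend $\phi$ to $\ov\Rset$ by $\phi(+\infty)=0$ and $\phi(-\infty)=\sup_\alpha\phi(\alpha)$; the extension is still continuous and non-increasing, so $\sup_{y\in Y}\phi(y)=\phi(\inf Y)$ for every nonempty $Y\subseteq\ov\Rset$, and hence, for any $\ov\Rset$-valued $g$,
\[
S_\e(\phi\circ g)(\bx)=\sup_{\|\bh\|\le\e}\phi\big(g(\bx+\bh)\big)=\phi\Big(\inf_{\|\bh\|\le\e}g(\bx+\bh)\Big).
\]
Writing $m(\bx):=\inf_{\|\bh\|\le\e}f(\bx+\bh)\in\ov\Rset$ and using that the non-decreasing continuous map $\psi_M$ also commutes with infima, $\inf_{\|\bh\|\le\e}f_M(\bx+\bh)=\psi_M(m(\bx))$, so the identities above become $S_\e(\phi\circ f_M)(\bx)=\phi(\psi_M(m(\bx)))$ and $S_\e(\phi\circ f)(\bx)=\phi(m(\bx))$; the analogues with $-f$ in place of $f$ follow because $\psi_M$ is odd. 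Since $\psi_M(t)\to t$ in $\ov\Rset$ for every $t$ and $\phi$ is continuous, $S_\e(\phi\circ f_M)\to S_\e(\phi\circ f)$ and $S_\e(\phi\circ(-f_M))\to S_\e(\phi\circ(-f))$ pointwise. A short case check on the position of $m(\bx)$ relative to $[-M,M]$ gives the uniform bound $\phi(\psi_M(m(\bx)))\le\max\!\big(\phi(m(\bx)),\phi(0)\big)\le S_\e(\phi\circ f)(\bx)+\phi(0)$, and likewise for the $\PP_0$-term; as $\prm(f)<\infty$ and $\PP_0,\PP_1$ are finite, these dominating functions are integrable. Dominated convergence then yields $\prm(f_M)\to\prm(f)$, and taking the infimum over all admissible $f$ gives $\inf_{\Rset\text{-valued}}\prm\le\inf_{\ov\Rset\text{-valued}}\prm$.

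The one point that warrants care is measurability: as the paper notes, $S_\e(g)$ need not be Borel even for Borel $g$, so all the integrals and the convergence theorem above should be read inside the measurability framework of \citep{FrankNilesWeed23minimax} (Appendix~A), in which $S_\e$ of a Borel function is measurable with respect to the completion of each $\PP_i$ and dominated convergence holds verbatim. Because the displayed identities hold pointwise everywhere, this is a bookkeeping matter rather than an analytic obstacle; the only genuinely substantive step is the reduction $S_\e(\phi\circ g)(\bx)=\phi(\inf_{\|\bh\|\le\e}g(\bx+\bh))$, which is precisely what lets the pointwise convergence go through without interchanging a supremum with the limit in $M$.
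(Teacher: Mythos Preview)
Your proof is correct and follows essentially the same approach as the paper: truncate $f$ to $f_M=\psi_M\circ f$, show that clamping commutes with the relevant operations so that $S_\e(\phi\circ f_M)\to S_\e(\phi\circ f)$ pointwise, and finish with dominated convergence using an integrable majorant built from $S_\e(\phi\circ f)$ plus a constant. The only cosmetic difference is the order of commutation: the paper pushes $\sigma_{[-N,N]}$ outward through $\phi$ and then through $S_\e$, obtaining $S_\e(\phi\circ f_N)=\sigma_{[\phi(N),\phi(-N)]}(S_\e(\phi\circ f))$, whereas you first collapse $S_\e(\phi\circ g)$ to $\phi(\inf g)$ and then commute $\psi_M$ with the infimum; both routes encode the same monotone--continuous commutativity and lead to the same conclusion.
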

Integrals of functions assuming values in $\Rset\cup\{\infty\}$ can still be defined using standard measure theory, see for instance \citep{folland}. 

Recall that \citep{FrankNilesWeed23minimax} originally proved their minimax result for $\ov\Rset$-valued functions and thus this lemma is essential for the statement of Theorem~\ref{th:minimax}.


\begin{proof}[Proof of Lemma~\ref{lemma:ov_Rset_to_Rset}]
    Let $f$ be an $\ov \Rset$-valued function for with $R_\phi^\e(f)<\infty$. We will show that the truncation $f_N=\min(\max(f,-N),N)$ satisfies $\lim_{N\to \infty} R^\e_\phi(f_N)= R^\e_\phi(f)$. Lemma~\ref{lemma:ov_Rset_to_Rset} then follows from this statement. 

    Define a function $\sigma_{[a,b]}\colon \ov \Rset\to [a,b]$ by 
    \[\sigma_{[a,b]}(\alpha)=\begin{cases}
        b &\alpha> b\\
        \alpha &\alpha \in [a,b]\\
        a &\alpha <a
    \end{cases}\]
    Notice that $\sigma_{[a,b]}(-\alpha)=-\sigma_{[-b,-a]}(\alpha)$. Thus if $a=-b$, then $\sigma_{[a,b]}$ is anti-symmetric. Furthermore, because $\phi$ is continuous and non-increasing, for any function $g$,
    \[\phi(\sigma_{[a,b]}(g))=\sigma_{[\phi(b),\phi(a)]}( \phi(g))\]
    and as $\sigma_{[a,b]}(\alpha)$ is continuous and non-decreasing,
    \[S_\e(\sigma_{[a,b]}(g))=\sigma_{[a,b]}(S_\e(g))\]
    Now let $f_N=\sigma_{[-N,N]}(f)$. Then $S_\e(\phi \circ f_N)$, $S_\e(\phi \circ -f_N)$ satisfy
    \[S_\e(\phi(f_N))=\sigma_{[\phi(N),\phi(-N)]}(S_\e(\phi \circ f)),\quad S_\e(\phi(-f_N))=\sigma_{[\phi(N),\phi(-N)]}(S_\e(\phi \circ -f))\]
    
    Therefore, $S_\e(\phi \circ f_N)$ ,$S_\e(\phi\circ -f_N)$ converge pointwise to $S_\e(\phi \circ f)$, $S_\e(\phi\circ -f)$. Furthermore, for $N\geq 1$, $ \phi(f_N)\leq \phi(f)+\phi(1)$ which is integrable with respect to $\PP_1$. Similarly, $ \phi(-f_N)\leq \phi(-f)+\phi(1)$ which is integrable with respect to $\PP_0$. Therefore, the dominated convergence theorem implies that 
    \begin{align*}
        &\lim_{N\to \infty} R_\phi^\e(f_N)=\prm(f)
    \end{align*}
\end{proof}
\section{Further Properties of Adversarially Consistent Losses-- Proofs of Lemma~\ref{lemma:a_def_main}, Lemma~\ref{lemma:minimizing_seq}, and Proposition~\ref{prop:a_def_consistent} }\label{app:further_properties_of_losses}

Recall the condition $C_\phi^*(1/2)<\phi(0)$ implies that minimizers of $C_\phi(1/2,\alpha)$ are bounded away from zero. Lemma~\ref{lemma:a_def_definition} states that this property actually holds for \emph{all} $\eta$. To prove this fact, we decompose $C_\phi(\eta,\alpha)$ into $C_\phi(1/2,\alpha)$ and a monotonic function: 
    \begin{equation}\label{eq:C_phi_rewrite}
        C_\phi(\eta, \alpha)=\eta \phi(\alpha)+(1-\eta)\phi(-\alpha)=(\eta-1/2)(\phi(\alpha)-\phi(-\alpha))+\frac 12(\phi(\alpha)+\phi(-\alpha)).    
    \end{equation}
    \begin{lemma}\label{lemma:a_def_definition}
        Assume that $C_\phi^*(1/2)<\phi(0)$. Then there exists an $a>0$ for which $|\alpha|<a$ implies $C_\phi(\eta,\alpha)\neq C_\phi^*(\eta)$ for all $\eta$. This $a$ satisfies $\phi(a)<\phi(0)$.
    \end{lemma}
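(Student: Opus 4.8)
The plan is to exploit the elementary identity
\[
C_\phi(\eta,\alpha) = C_\phi(\tfrac12,\alpha) + \bigl(\eta - \tfrac12\bigr)\bigl(\phi(\alpha) - \phi(-\alpha)\bigr),
\]
in which $\alpha \mapsto \phi(\alpha) - \phi(-\alpha)$ is non-increasing by Assumption~\ref{as:phi}; this is the decomposition of $C_\phi(\eta,\cdot)$ into $C_\phi(\tfrac12,\cdot)$ and a monotone correction. Since $C_\phi(\tfrac12,0) = \phi(0)$, the hypothesis $C_\phi^*(\tfrac12) < \phi(0)$ furnishes a point $\alpha_0 \neq 0$ with $C_\phi(\tfrac12,\alpha_0) < \phi(0)$, and because $C_\phi(\tfrac12,\cdot)$ is even we may take $\beta := |\alpha_0| > 0$ and set $\mu := C_\phi(\tfrac12,\beta) < \phi(0)$.

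First I would extract two facts. Since $\phi$ is non-increasing and $\beta > 0$ we have $\phi(-\beta) \geq \phi(\beta)$, hence $\mu = \tfrac12\phi(\beta) + \tfrac12\phi(-\beta) \geq \phi(\beta)$; in particular $\phi(\beta) \leq \mu < \phi(0)$. Second, applying the displayed identity at $\alpha = \beta$ for any $\eta \geq \tfrac12$ gives $C_\phi(\eta,\beta) = \mu + (\eta - \tfrac12)(\phi(\beta) - \phi(-\beta)) \leq \mu$, since $\eta - \tfrac12 \geq 0$ and $\phi(\beta) - \phi(-\beta) \leq 0$; together with the symmetry $C_\phi^*(\eta) = C_\phi^*(1-\eta)$ (apply this to $1-\eta$ and the point $-\beta$ when $\eta \leq \tfrac12$) this yields the uniform bound $C_\phi^*(\eta) \leq \mu < \phi(0)$ for every $\eta \in [0,1]$. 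The only real obstacle is this step: promoting the single good point $\beta$ to an estimate valid simultaneously for all $\eta$; everything else is routine.

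It then remains to pick $a$ and verify the claimed properties. Since $\phi$ is continuous with $\phi(0) > \mu \geq \phi(\beta)$, the intermediate value theorem on $[0,\beta]$ produces some $a \in (0,\beta)$ with $\mu < \phi(a) < \phi(0)$, which already gives $\phi(a) < \phi(0)$. Finally I would invoke the pointwise bound $C_\phi(\eta,\alpha) \geq \phi(|\alpha|)$, valid for every $\eta$ and $\alpha$: for $\alpha \geq 0$ one has $\phi(-\alpha) \geq \phi(\alpha)$, so $C_\phi(\eta,\alpha) = \eta\phi(\alpha) + (1-\eta)\phi(-\alpha) \geq \phi(\alpha) = \phi(|\alpha|)$, and the case $\alpha \leq 0$ is symmetric. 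Hence for $|\alpha| < a$ we obtain $C_\phi(\eta,\alpha) \geq \phi(|\alpha|) \geq \phi(a) > \mu \geq C_\phi^*(\eta)$, so $C_\phi(\eta,\alpha) \neq C_\phi^*(\eta)$ for all $\eta$, as required.
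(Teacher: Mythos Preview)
Your proof is correct and takes a somewhat different route from the paper's. Both arguments start from the same decomposition
\[
C_\phi(\eta,\alpha) = C_\phi(\tfrac12,\alpha) + (\eta-\tfrac12)\bigl(\phi(\alpha)-\phi(-\alpha)\bigr),
\]
but they exploit it differently. The paper sets $a$ equal to the smallest nonnegative minimizer of $C_\phi(\tfrac12,\cdot)$ and then argues, via the decomposition, that over $[-a,a]$ the map $\alpha\mapsto C_\phi(\eta,\alpha)$ can achieve its minimum only at an endpoint: the even term $C_\phi(\tfrac12,\cdot)$ is minimized on $[-a,a]$ only at $\pm a$, and the monotone correction $(\eta-\tfrac12)(\phi(\alpha)-\phi(-\alpha))$ strictly prefers one endpoint over the other. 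Hence no $\alpha\in(-a,a)$ can minimize $C_\phi(\eta,\cdot)$. You instead use the decomposition only once, to extract the uniform upper bound $C_\phi^*(\eta)\le\mu<\phi(0)$, and then combine it with the independent pointwise lower bound $C_\phi(\eta,\alpha)\ge\phi(|\alpha|)$ and the intermediate value theorem to locate $a$. Your route is a bit more elementary---it never requires the infimum $C_\phi^*(\tfrac12)$ to be attained---while the paper's choice of $a$ is the maximal one (since $a$ itself minimizes $C_\phi(\tfrac12,\cdot)$); however, Lemma~\ref{lemma:minimizing_seq} only needs some valid $a$, so nothing downstream is lost.
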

\begin{proof}
    Let $S$ be the set of non-negative minimizers of $C_\phi(1/2,\cdot)$ and define $a=\inf S$. Because $\phi$ is continuous, $a$ is also a minimizer of $C_\phi(1/2,\cdot)$ and thus $C_\phi(1/2,a)=C_\phi^*(1/2)<\phi(0)=C_\phi(1/2,0)$. Therefore, $\phi(a)<\phi(0)$ follows from the fact that $\phi(-a)\geq \phi(0)$. 
    
    We will now show that $C_\phi(\eta,\cdot)$ does not achieve its optimum on $(-a,a)$ for any $\eta$. First, this statement holds for $\eta=1/2$ due to the definition of $a$. Next, we will assume that $\eta>1/2$, the case $\eta<1/2$ is analogous. 
    To start, we can decompose the quantity $C_\phi(\eta,\alpha)$ as in \eqref{eq:C_phi_rewrite}.
    Subsequently, because $a$ is the smallest positive 
    minimizer of $C_\phi(1/2,\cdot)$, $1/2(\phi(\alpha)+\phi(-\alpha))$ assumes its infimum over $[-a,a]$ only at $-a$ and $a$. 
    Next, notice that $\phi(\alpha)-\phi(-\alpha)$ is non-increasing on $[-a,a]$. Furthermore, because $\phi(a)<\phi(0)$, one can conclude that $\phi(-a)-\phi(a)>0>\phi(a)-\phi(-a)$, and thus
    the function $\alpha \mapsto \phi(\alpha)-\phi(-\alpha)$ is non-constant on $[-a,a]$.
    Therefore, \eqref{eq:C_phi_rewrite} achieves its optimum over $[-a,a]$ only at $\alpha=a$. 
    Thus, any $\alpha \in (-a,a)$ cannot be a minimizer of $C_\phi(\eta,\cdot)$ because $C_\phi(\eta,\alpha)>C_\phi(\eta,a)\geq C_\phi^*(\eta)$.

\end{proof}

\begin{proof}[Proof of Lemma~\ref{lemma:a_def_main}]
    Lemma~\ref{lemma:a_def_definition} (above) immediately implies the forward direction.

    For the backwards direction, note that if there is an $a$ for which $|\alpha^*|\geq a$ for any minimizer $C_\phi(\eta,\cdot)$ for all $\eta$, then $0$ does not minimize $C_\phi(1/2,\cdot)$. Therefore $C_\phi^*(1/2)<C_\phi(1/2,0)=\phi(0)$.

\end{proof}

\begin{proof}[Proof of Proposition~\ref{prop:a_def_consistent}]

    We will argue that for each $\eta$, every minimizer of $C_\phi(\eta,\cdot)$ over $\ov \Rset$ is also a minimizer of $C(\eta,\cdot)$. Proposition~\ref{prop:alt_consistency_characterization} will then imply that $\phi$ is consistent. To start, notice that \emph{every} $\alpha$ is a minimizer of $C(1/2,\cdot)$. Next, we will show that for $\eta>1/2$, every minimizer of $C_\phi(\eta,\cdot)$ is also a minimizer of $C(\eta,\cdot)$. The argument for $\eta<1/2$ is analogous.

    Consider the decomposition of $C_\phi(\eta,\alpha)$ in \eqref{eq:C_phi_rewrite}. Let $a$ be as in Lemma~\ref{lemma:a_def_definition} and notice that if $\alpha>a$ then $\phi(\alpha)<\phi(-\alpha)$. Hence as $\eta>1/2$, then $C_\phi(\eta,\alpha)<C_\phi(\eta,-\alpha)$. Furthermore, Lemma~\ref{lemma:a_def_definition} implies that there is no minimizer to $C_\phi(\eta,\cdot)$ in $(-a,a)$ and thus every minimizer to $C_\phi(\eta,\cdot)$ must be strictly positive. Therefore, every minimizer of $C_\phi(\eta,\cdot)$ also minimizes $C(\eta,\cdot)$.
    
\end{proof}

 Next, Lemma~\ref{lemma:minimizing_seq} is a quantitative version of Lemma~\ref{lemma:a_def_definition}.

\begin{proof}[Proof of Lemma~\ref{lemma:minimizing_seq}]
    Let $a$ be as in Lemma~\ref{lemma:a_def_definition} and define $\iphi$ by 
    \begin{equation*}
        \iphi(y)=\sup\{\alpha: \phi(\alpha)\geq y\}.
    \end{equation*}
    The function $\iphi$ is the right inverse of $\phi$--- this function satisfies $\phi(\iphi(y))=y$ while $\iphi(\phi(\alpha))\geq \alpha$.

    Set $k =1/2(\phi(0)+\phi(a))$, $c=\iphi(k)=\sup \{ \alpha\colon \phi(\alpha)\geq k\}$. From the definition of $c$, one can conclude that $\alpha>c$ implies that $\phi(\alpha)<\phi(c)$.

    Because $\phi(a)<k=\phi(c)<\phi(0)$ and $\phi$ is non-increasing, $0<c<a$. Thus $[-c,c]\subset (-a,a)$ and Lemma~\ref{lemma:a_def_definition} implies that for all $\alpha\in [-c,c]$ and $\eta\in [0,1]$, $C_\phi(\eta,\alpha)-C_\phi^*(\eta)>0$. As this expression is jointly continuous in the variables $\eta$, $\alpha$ and $[-c,c]\times [0,1]$ is compact, one can define
    \[\delta=\inf_{\substack{\alpha\in [-c,c]\\ \eta\in [0,1]}} C_\phi(\eta,\alpha)-C_\phi^*(\eta) \]
    and then it holds that $\delta>0$ and $C_\phi(\eta,\alpha)\geq C_\phi^*(\eta)+\delta$ for all $\alpha\in [-c,c]$.

\end{proof}
\section{Optimal Transport Facts--- Proof of Lemma~\ref{lemma:S_e_inequality}}\label{app:W_infty}

    \begin{proof}[Proof of Lemma~\ref{lemma:S_e_inequality}]
            Let $\QQ'$ be any measure with $W_\infty(\QQ',\QQ)\leq  \e$. Let $\gamma$ be a coupling with marginals $\QQ$ and $\QQ'$ for which $\esssup_{(\bx,\by)\sim \gamma} \|\bx-\by\|\leq \e$. This measure $\gamma$ is supported on $\Delta_\e=\{(\bx,\by)\colon \|\bx-\by\|\leq \e\}$. Then
    \begin{align*}
        &\int gd\QQ'=\int g(\bx') d\gamma(\bx,\bx')=\int g(\bx')\one_{\|\bx'-\bx\|\leq \e} d\gamma(\bx,\bx')\\
        &\leq \int S_\e(g)(\bx)\one_{\|\bx'-\bx\|\leq \e} d\gamma(\bx,\bx')=\int S_\e(g)(\bx) d\gamma(\bx,\bx')=\int S_\e(g)d\QQ    
    \end{align*}

    \end{proof}

\section{Proof of Theorem~\ref{th:minimax_classification}}\label{app:minimax_classification_proof}
As observed in Section~\ref{sec:quantitative}, the $\rho$-margin loss satisfies $R_{\phi_\rho}^\e(f)\geq R^\e(f)$ while $C_{\phi_\rho}^*(\eta)=C^*(\eta)$. Theorem~\ref{th:minimax} then implies that 
\[\sup_{\substack{\PP_0'\in \Wball\e (\PP_0)\\ \PP_1'\in \Wball\e(\PP_1)}} \bar R(\PP_0',\PP_1')=\sup_{\substack{\PP_0'\in \Wball\e (\PP_0)\\ \PP_1'\in \Wball\e(\PP_1)}} \bar R_{\phi_\rho}(\PP_0',\PP_1')=\inf_f R_{\phi_\rho}^\e(f)\geq \inf_f R^\e(f)\]

The opposite inequality follows from swapping an $\inf$ and a $\sup$--- a form of weak duality. We prove this weak duality for $\ov \Rset=\Rset \cup\{-\infty,+\infty\}$-valued functions in order to later apply a result from \citep{FrankNilesWeed23minimax} which is also stated for $\ov \Rset$-valued functions.

\begin{lemma}[Weak Duality]\label{lemma:weak_duality}
    Let $R^\e$ be the adversarial classification loss.
      Then 
     \begin{equation}\label{eq:weak_duality}
         \inf_{\substack{f\text{ Borel,}\\ f\text{ $\ov \Rset$-valued}}} R^\e(f) \geq \sup_{\substack{\PP_0'\in \Wball\e (\PP_0)\\ \PP_1'\in \Wball\e(\PP_1)}} \bar R(\PP_0',\PP_1')
     \end{equation}
\end{lemma}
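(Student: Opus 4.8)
The plan is to establish \eqref{eq:weak_duality} in the standard ``pointwise'' manner of weak duality: fix an arbitrary Borel, $\ov\Rset$-valued $f$ together with an arbitrary admissible pair $\PP_0'\in\Wball\e(\PP_0)$, $\PP_1'\in\Wball\e(\PP_1)$, and show directly that $R^\e(f)\geq \bar R(\PP_0',\PP_1')$. Taking the supremum over $(\PP_0',\PP_1')$ and then the infimum over $f$ gives \eqref{eq:weak_duality}. Note at the outset that even if $f$ takes the values $\pm\infty$, the indicators $\one_{f\leq 0}$ and $\one_{f>0}$ remain well-defined Borel functions (the first equals $1$ on $\{f=-\infty\}$, the second on $\{f=+\infty\}$), and $S_\e$ applied to a $\{0,1\}$-valued function is again nonnegative and bounded, so the integrals defining $R^\e(f)$ in \eqref{eq:adv_zero_one_loss_sup} make sense under the measurability conventions of \citep{FrankNilesWeed23minimax}.

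The first step is to apply Lemma~\ref{lemma:S_e_inequality} twice. Since $W_\infty(\PP_1',\PP_1)\leq\e$, taking $g=\one_{f\leq 0}$, $\QQ=\PP_1$, $\QQ'=\PP_1'$ yields $\int S_\e(\one_{f\leq 0})\,d\PP_1\geq\int\one_{f\leq 0}\,d\PP_1'$; since $W_\infty(\PP_0',\PP_0)\leq\e$, taking $g=\one_{f>0}$, $\QQ=\PP_0$, $\QQ'=\PP_0'$ yields $\int S_\e(\one_{f>0})\,d\PP_0\geq\int\one_{f>0}\,d\PP_0'$. Adding these two inequalities,
\[
R^\e(f)=\int S_\e(\one_{f\leq 0})\,d\PP_1+\int S_\e(\one_{f>0})\,d\PP_0\ \geq\ \int\one_{f\leq 0}\,d\PP_1'+\int\one_{f>0}\,d\PP_0'\,.
\]

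The second step rewrites the right-hand side pointwise. Put $\PP'=\PP_0'+\PP_1'$ and $\eta'=d\PP_1'/d\PP'$, so that $d\PP_1'/d\PP'=\eta'$ and $d\PP_0'/d\PP'=1-\eta'$. Then
\[
\int\one_{f\leq 0}\,d\PP_1'+\int\one_{f>0}\,d\PP_0'=\int\bigl(\eta'(\bx)\one_{f(\bx)\leq 0}+(1-\eta'(\bx))\one_{f(\bx)>0}\bigr)\,d\PP'(\bx)=\int C(\eta'(\bx),f(\bx))\,d\PP'(\bx)\,,
\]
with $C$ as in \eqref{eq:C_def}, extended unambiguously to $f(\bx)\in\ov\Rset$ (at $-\infty$ it equals $\eta'$, at $+\infty$ it equals $1-\eta'$). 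Since $C(\eta'(\bx),f(\bx))\geq C^*(\eta'(\bx))$ pointwise by definition of $C^*$, this integral is at least $\int C^*(\eta'(\bx))\,d\PP'(\bx)=\bar R(\PP_0',\PP_1')$ by \eqref{eq:cdl_def}. Hence $R^\e(f)\geq\bar R(\PP_0',\PP_1')$ for every admissible $f$ and every admissible pair, and taking $\sup$ over $(\PP_0',\PP_1')$ followed by $\inf$ over $f$ proves \eqref{eq:weak_duality}.

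I do not expect a genuine obstacle here: once Lemma~\ref{lemma:S_e_inequality} is in hand the argument is three lines, and the only care needed is bookkeeping---namely that $S_\e$ of an indicator is a nonnegative bounded function so all integrals are defined (exactly the setting of Lemma~\ref{lemma:S_e_inequality}), and that the $\ov\Rset$-valued convention for $f$ affects only the two indicators, harmlessly. Combined with the reverse inequality established just above via Theorem~\ref{th:minimax} together with the comparisons $R_{\phi_\rho}^\e\geq R^\e$ and $C_{\phi_\rho}^*=C^*$, this weak-duality bound closes the chain of inequalities and yields the value equality in \eqref{eq:minimax_classification}.
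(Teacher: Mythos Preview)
Your proof is correct and follows essentially the same approach as the paper: both invoke Lemma~\ref{lemma:S_e_inequality} to pass from $S_\e$-integrals against $\PP_i$ to ordinary integrals against $\PP_i'$, then rewrite the result as $\int C(\eta',f)\,d\PP'$ and bound below by $\int C^*(\eta')\,d\PP'=\bar R(\PP_0',\PP_1')$. The only cosmetic difference is that you fix $f$ and $(\PP_0',\PP_1')$ at the outset and prove the pointwise inequality directly, whereas the paper writes out the inf--sup swap explicitly; the logical content is identical.
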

\begin{proof}
    Notice that Lemma~\ref{lemma:S_e_inequality} implies that for any function $g$,
    \[\int S_\e(g)d\QQ\geq \sup_{\QQ'\in \Wball \e(\QQ)} \int gd\QQ'.\]
    Applying this inequality to the functions $\one_{f\leq 0},\one_{f>0}$ in the expression for $R^\e(f)$ results in
    \[\int S_\e(\one_{f\leq 0})d\PP_1+\int S_\e(\one_{f>0} )d\PP_0\geq \sup_{\substack{\PP_0'\in \Wball\e (\PP_0)\\ \PP_1'\in \Wball\e(\PP_1)}} \int \one_{f\geq 0} d\PP_1'+\int \one_{f<0}d\PP_0'\]
    Thus by swapping the $\inf$ and the $\sup$ and defining $\PP'=\PP_0'+\PP_1'$, $\eta'=d\PP_1'/d\PP'$,
    \begin{align*}
        &\inf_{\substack{f\text{ Borel}\\f\text{ $\ov \Rset$-valued}}} \int S_\e(\one_{f\leq 0})d\PP_1+\int S_\e(\one_{f>0})d\PP_0\geq \inf_{\substack{f\text{ Borel}\\ f\text{ $\ov \Rset$-valued}}}\sup_{\substack{\PP_0'\in \Wball\e (\PP_0)\\ \PP_1'\in \Wball\e(\PP_1)}} \int \one_{f\leq 0}d\PP_1'+\int \one_{f>0}d\PP_0'\\
        &\geq \sup_{\substack{\PP_0'\in \Wball\e (\PP_0)\\ \PP_1'\in \Wball\e(\PP_1)}} \inf_{\substack{f \text{ Borel}\\f\text{ $\ov \Rset$-valued}}}\int \one_{f\leq 0}d\PP_1'+\int \one_{f>0}d\PP_0'\\
        &=\sup_{\substack{\PP_0'\in \Wball\e (\PP_0)\\ \PP_1'\in \Wball\e(\PP_1)}}\inf_{\substack{f\text{ Borel}\\f\text{ $\ov \Rset$-valued}}}\int C(\eta',f) d\PP'\geq \sup_{\substack{\PP_0'\in \Wball\e (\PP_0)\\ \PP_1'\in \Wball\e(\PP_1)}}\int C^*(\eta') d\PP'
        =\sup_{\substack{\PP_0'\in \Wball\e (\PP_0)\\ \PP_1'\in \Wball\e(\PP_1)}}\bar R(\PP_0',\PP_1')
    \end{align*}
\end{proof}

Strong duality and existence of maximizers/minimizers then follows from weak duality.
\begin{proof}[Proof of Theorem~\ref{th:minimax_classification}]
    Let $\phi_\rho(\alpha)$ be the $\phi$-margin loss $\phi_\rho=\min(1,\max( 1-\alpha/\rho,0))$. Then as discussed in Section~\ref{sec:quantitative}, one can bound the adversarial classification risk $R^\e(f)$ by $R^\e(f)\leq R_{\phi_\rho}^\e(f)$ but $C_{\phi_\rho}^*(\eta)=C^*(\eta)$ and thus $\bar R_{\phi_\rho}=\bar R$. 
    
    The minimax theorem for surrogate losses in \citep{FrankNilesWeed23minimax} (Theorem~6) states that there is an $\ov \Rset$-valued function $f^*$, and measures $\PP_0^*,\PP_1^*$ for which $R_{\phi_\rho}^\e(f^*)=\bar R_{\phi_\rho}(\PP_0^*,\PP_1^*)$. Thus weak duality (Lemma~\ref{lemma:weak_duality}) implies \[\bar R_{\phi_\rho}(\PP_0^*,\PP_1^*)=\bar R(\PP_0^*,\PP_1^*)\leq R^\e(f^*)\leq R_{\phi_\rho}^\e(f^*).\] However, the fact that $R_{\phi_\rho}^\e(f^*)=\bar R_{\phi_\rho}(\PP_0^*,\PP_1^*)$ implies that the inequalities above must actually be equalities. This relation proves strong duality for the adversarial classification risk (Equation~\ref{eq:minimax_classification}) and that $f^*$ minimizes $R^\e$ and $(\PP_0^*,\PP_1^*)$ maximizes $\bar R$ over $\Wball \e(\PP_0)\times \Wball \e(\PP_1)$. 

    Next, let $\hat f=\min(1,\max(\hat f,-1))$. Then $\hat f$ is $\Rset$-valued and $R^\e(\hat f)=R^\e(f^*)$. Thus $\hat f$ is an $\Rset$-valued minimizer of $R^\e$.    
    \end{proof}
\section{Proof of Lemma~\ref{lemma:classification_comp_slack_easy}}\label{app:easy}
\begin{proof}[Proof of Lemma~\ref{lemma:classification_comp_slack_easy}]
    Lemma~\ref{lemma:S_e_inequality} implies that for each $n$,
    \begin{equation*}
    	\int S_\e(\one_{f_n\leq 0})d\PP_1\geq \int \one_{f_n\leq 0} d\PP_1^*\,.
    \end{equation*}
    Therefore, writing $\ell_n =\int S_\e(\one_{f_n\leq 0})d\PP_1$ and $r_n = \int \one_{f_n\leq 0} d\PP_1^*$, we have
    \begin{equation}
    	\liminf_{n\to \infty} r_n \leq  \liminf_{n\to \infty} \ell_n \leq \limsup_{n\to \infty} \ell_n\,.
    \end{equation}
Therefore, \eqref{eq:sup_comp_slack_approx_classification_mod_1} implies both that that the limit $\lim_{n\to \infty} \int S_\e(\one_{f_n\leq 0})d\PP_1$ exists and that 
    \begin{equation}\label{eq:ls_li_match}
        \lim_{n\to \infty} \int S_\e(\one_{f_n\leq 0})d\PP_1= \liminf_{n\to \infty}\int  \one_{f_n\leq 0}d\PP_1^*
    \end{equation}
    
    Similarly, because $\limsup_{n\to \infty} \ell_n\geq \limsup_{n\to \infty} r_n\geq \liminf_{n\to \infty} r_n$, the relation \eqref{eq:sup_comp_slack_approx_classification_mod_1} implies that the limit $\lim_{n\to \infty} \int \one_{f_n\leq 0} d\PP_1^*$ exists. The first relation of \eqref{eq:sup_comp_slack_approx_classification} then follows from \eqref{eq:ls_li_match} and the existence of the limit of 
    $\int \one_{f_n\leq 0} d\PP_1^*$.

    An analogous argument shows that \eqref{eq:sup_comp_slack_approx_classification_mod_0} implies the second relation of \eqref{eq:sup_comp_slack_approx_classification}.
\end{proof}

\end{document}